\documentclass[conference]{IEEEtran}
\usepackage{times}
\usepackage[activate={true,nocompatibility},final,tracking=true,kerning=true,spacing=true,factor=1100,stretch=10,shrink=10]{microtype}
\usepackage[bookmarks=true]{hyperref}
\usepackage[ruled, vlined, linesnumberedhidden]{algorithm2e}
\usepackage{eucal}
\usepackage{dsfont}
\usepackage{amsmath}
\usepackage{amsfonts}

\usepackage{amsthm}
\usepackage{mathtools}
\usepackage{physics}
\usepackage{lipsum}
\usepackage{nicefrac}
\usepackage{booktabs}
\usepackage{multirow}
\usepackage{flushend}
\usepackage{makecell}
\usepackage{tabularx}

\usepackage[table]{xcolor}  % for \rowcolor

\let\labelindent\relax
\usepackage{enumitem}
\usepackage{thmtools}

\usepackage[numbers, sort&compress]{natbib}
\usepackage{multicol}
\usepackage{cleveref}

\crefformat{equation}{(#2#1#3)}
\crefrangeformat{equation}{(#3#1#4) to~(#5#2#6)}
\crefmultiformat{equation}{(#2#1#3)}%
{ and~(#2#1#3)}{, (#2#1#3)}{ and~(#2#1#3)}

\usepackage{caption}
\usepackage{eucal}
\usepackage{dsfont}
\usepackage{xspace}
\usepackage{xifthen}

\newcommand{\set}[1]{\mathbf{#1}}
\newcommand{\func}[1]{\mathds{#1}}
\newcommand{\oper}[1]{\mathcal{#1}}
\newcommand{\dist}[1]{\mathcal{#1}}

\DeclareMathOperator*{\argmin}{argmin}

\DeclareMathOperator*{\minimize}{minimize}

\newcommand{\Sink}{\func{T}_\varepsilon}
\newcommand{\EOT}{\func{T}_\varepsilon}

\newcommand{\KL}{\func{D}}

\newcommand{\inner}[2]{\left\langle #1, #2 \right\rangle}
\newcommand{\transp}[1]{{#1}^{\mathsf{T}}}

\newcommand{\R}{\set{R}}
\newcommand{\E}{\func{E}}
\newcommand{\algo}[1]{\texttt{#1}}

\newcommand{\scd}[1][]{\ifthenelse{\isempty{#1}}%
        {SCD\xspace}% if #1 is empty
        {SCD\xspace}% if #1 is not empty}
        }
\newcommand{\mppi}{MPPI\xspace}

\newcommand{\opt}[1]{{#1}^{\star}}
\newcommand{\inv}[1]{{#1}^{-1}}
\newcommand{\define}{\coloneqq}

\usepackage{pifont}         % for checkmarks

\newcommand{\seq}[1]{\vb{#1}}

\newtheorem{remark}{Remark}
\newtheorem{proposition}{Proposition}

\IEEEoverridecommandlockouts 

\begin{document}

% paper title
%\title{Sampling-Based Control via Entropy-Regularized Optimal Transport}

\title{Sampling-Based Control via Entropy-Regularized Optimal Transport}
% You will get a Paper-ID when submitting a pdf file to the conference system
%\author{Author Names Omitted for Anonymous Review. Paper-ID 1011}
\author{Vincent Pacelli,\IEEEauthorrefmark{2} Akash Ratheesh,\IEEEauthorrefmark{2}  Evangelos A. Theodorou\\
Autonomous Control and Decision Systems Laboratory\\
Georgia Institute of Technology\thanks{\IEEEauthorrefmark{2}Equal Contribution}}

\maketitle

\begin{abstract}
Sampling-based model predictive control methods like MPPI and CEM are essential for real-time control of nonlinear robotic systems, particularly where discontinuous dynamics preclude gradient-based optimization. However, these methods derive from information-theoretic objectives that are agnostic to the geometry of the control problem, leading to pathological behaviors such as mode-averaging when the cost landscape is complex. We present OT-MPC\footnote{Project Page: https://acdslab.github.io/ot-mpc/}, a sampling-based algorithm that overcomes these limitations through an entropy-regularized optimal transport formulation. By computing an optimal coupling between candidate control sequences and low-cost proposals, OT-MPC refines candidates toward nearby promising samples while coordinating updates across the ensemble to maintain coverage of the solution space. We derive closed-form, gradient-free updates via the Sinkhorn algorithm, enabling real-time performance. Experiments on navigation, manipulation, and locomotion tasks demonstrate improved success rates over existing methods.

\end{abstract}

\IEEEpeerreviewmaketitle

\section{Introduction}
\label{sec:intro}
Sampling-based model predictive control is a workhorse for real-time control of nonlinear and contact-rich robotic systems. Algorithms like Model Predictive Path Integral (\algo{MPPI})~\cite{Williams17, Williams18} and the Cross-Entropy Method (\algo{CEM})~\cite{Rubinstein99, Kobilarov11, Pinneri20} leverage parallel simulation to optimize complex cost functions by sampling and scoring candidate trajectories. Unlike gradient-based methods, they require only the ability to evaluate trajectory costs---making them compatible with black-box simulators and learning-based models. This flexibility makes them a common choice for manipulation and locomotion~\cite{Abraham20, Xue24, Hansen24, Alvarez25, Keshavarz25}, where gradients are unavailable or expensive.

Despite their success, the information-theoretic foundation of these methods leads to fundamental limitations. \algo{MPPI} aggregates cost information by taking a weighted average over all samples, with weights given by exponentiated trajectory costs. Since this average ignores where samples lie in the space, it produces \emph{mode-averaging}: the resulting control does not represent any local minimum, but a blend of multiple minima. For instance, a robot navigating around an obstacle will average trajectories on either side---steering directly into a collision. \algo{CEM} avoids mode-averaging through elite selection, but this mechanism induces \emph{mode-seeking} behavior that commits aggressively to one mode and limits exploration.

\begin{figure}[t!]
\centering
\includegraphics[width=\columnwidth]{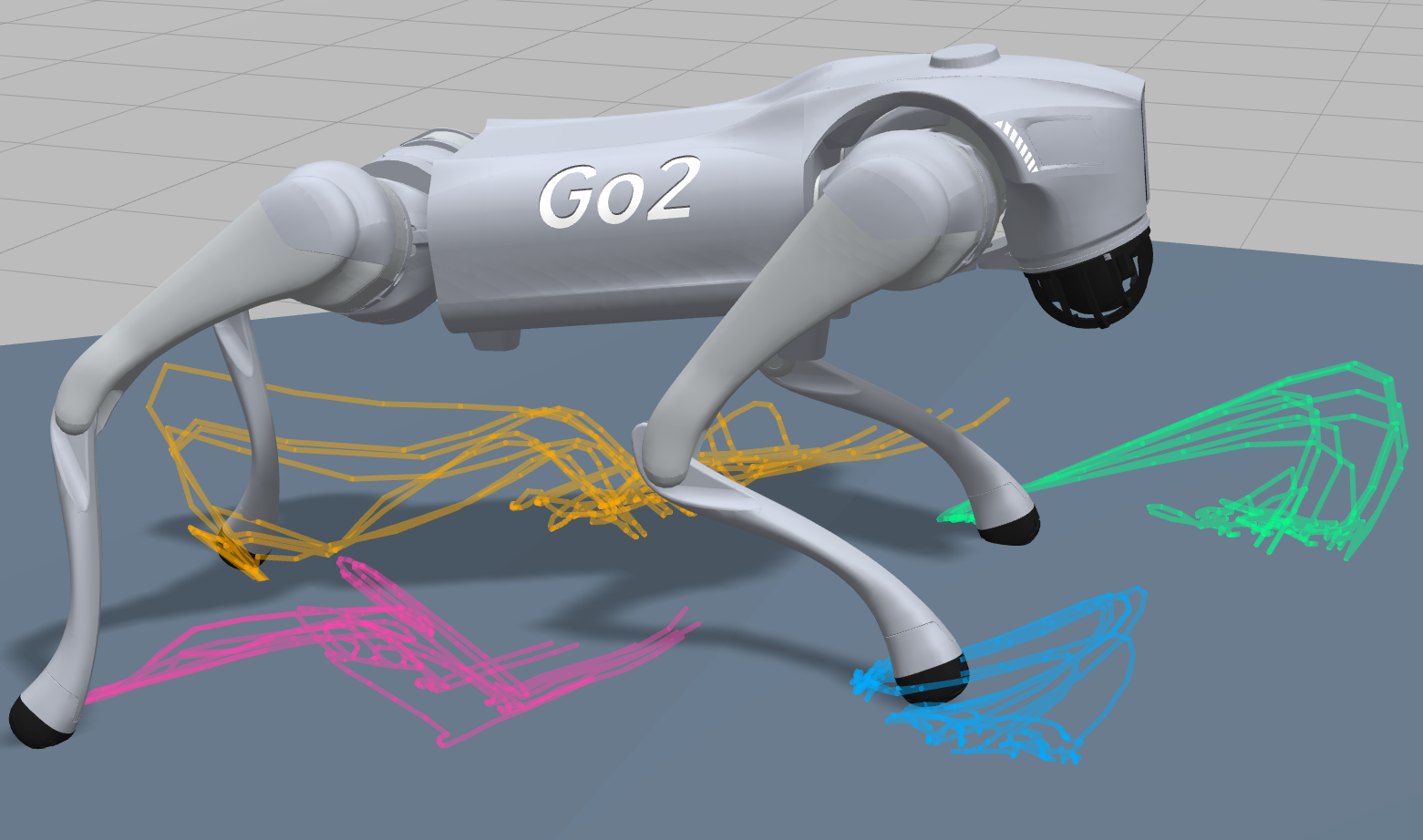}%
  \caption{The proposed \algo{OT-MPC} algorithm controlling a Unitree Go2 quadruped. Colored curves show planned foot trajectories from different candidate solutions at a key decision point  where multiple gait strategies are viable. \algo{OT-MPC} naturally maintains diverse candidates when the cost landscape is multimodal; when candidates converge to a single mode, local proposal sampling concentrates refinement where it is needed.}%
  \label{fig:teaser}%
\end{figure}

Both failure modes stem from the control-as-inference formulation underlying these methods, which frames optimal control as sampling from a Gibbs distribution over trajectories. The resulting objective is an information-theoretic divergence---typically the KL divergence---which quantifies \emph{how} probability mass is distributed but not \emph{where}. Methods derived from such objectives therefore aggregate sample information \emph{globally}, without regard to spatial arrangement. The result is no inherent mechanism for local refinement, mode preservation, or ensemble coordination. Prior work has sought to address these shortcomings, but existing approaches are typically \emph{post hoc} heuristic modifications~\cite{Abraham20, Bhardwaj22, Xue24, Yi24, Okada20}.

This article addresses these shortcomings at a foundational level by developing \textbf{Sinkhorn Coordinate Descent}~(\algo{SCD}), a sampling-based optimization algorithm derived from an optimal transport~(OT) variational principle. Unlike information-theoretic divergences, OT objectives such as the Wasserstein distance measure not only whether two distributions assign probability mass similarly, but also the cost of transforming one into the other---incorporating spatial information. Entropy regularization softens the coupling and enables efficient computation via the Sinkhorn algorithm~\cite{Cuturi13, Peyre19}. The resulting MPC algorithm, \textbf{Optimal Transport MPC}~(\algo{OT-MPC}), computes an optimal coupling between particles and low-cost proposals, then updates each particle toward its weighted barycenter. Like \algo{MPPI}, this algorithm requires only cost evaluations, making it compatible with non-smooth dynamics and non-differentiable costs. Experiments on navigation, manipulation, and locomotion demonstrate improved success rates over existing methods.

\textbf{\textit{Contributions.}} In summary, this article contributes the following  to the theory and practice of sampling-based control:
\begin{enumerate}[leftmargin=*, itemsep=2pt, topsep=2pt, label=\arabic*.]
    \item We propose \algo{SCD}, a gradient-free optimization algorithm, and its MPC instantiation, \algo{OT-MPC}. Unlike other sampling-based methods, \algo{SCD} updates particles based on both cost and geometric proximity---enabling local refinement while preserving diversity.
    \item We establish theoretical properties of \algo{SCD}, including monotone descent, convergence guarantees, and closed-form updates for quadratic costs.
    \item We demonstrate empirically that \algo{OT-MPC} achieves higher success rates than \algo{MPPI}, \algo{CEM}, and \algo{SV-MPC} on challenging navigation, manipulation, and locomotion tasks.
\end{enumerate}
%\lipsum[1-8]

\section{Related Work}
\label{sec:related work}
% Related Work Section for Sinkhorn MPC
% Include with \input{related_work}

% \begin{table}[t]
% \centering
% \small
% \setlength{\tabcolsep}{4pt}
% \begin{tabular}{@{}lcccc@{}}
% \toprule
% \textbf{Method} & \textbf{Gradient-free} & \textbf{Update rule} & \textbf{Multimodal} \\
% \midrule
% \mppi{} & \cmark & Global weighted avg. & \xmark\ \\[2pt]
% \cem{} & \cmark & Elite Selection & \xmark\ \\[2pt]
% \cmaes{} & \cmark & Covariance adaptation & \cmark \\[2pt]
% \svgd{} & \xmark & Kernel gradient & \cmark \\[2pt]
% \midrule
% \textbf{\scd[0]} (Ours) & \cmark & \textbf{Per-particle OT} & \cmark \\
% \bottomrule
% \end{tabular}
% \caption{\mdseries\upshape Comparison of sampling-based optimization methods. \scd[0] is gradient-free while preserving multimodal structure via optimal transport.}
% \label{tab:method comparison}
% \end{table}

The \algo{OT-MPC} algorithm inherits the control-as-inference formulation common to sampling-based MPC but replaces the information-theoretic objective with an optimal transport one. We first review variational inference methods and then discuss prior uses of optimal transport in robotics.

\subsection{Variational Inference for Model-Predictive Control}

Sampling-based MPC methods such as \algo{MPPI}~\cite{Williams17, Williams18} and \algo{CEM}~\cite{Rubinstein99, Kobilarov11, Pinneri20} frame optimal control as variational inference, approximating a Gibbs distribution over low-cost trajectories~\cite{Attias03, Lambert21, Wang21}. Both methods use information-theoretic objectives---\algo{MPPI} minimizes KL divergence via importance sampling, while \algo{CEM} fits a parametric distribution to elite samples. As discussed in the introduction, the former leads to mode-averaging and the latter to mode-seeking. Recent variants address these issues through annealing schedules~\cite{Xue24}, covariance adaptation~\cite{Yi24}, warm-starting~\cite{Bhardwaj22}, and mixture models~\cite{Okada20}, but these are heuristics that do not change the variational objective.

The closest work conceptually is the Tsallis VI-MPC~\cite{Wang21}, which also modifies the divergence in the inference formulation. That work unifies \algo{MPPI} and \algo{CEM} through a generalized entropy objective that interpolates between mode-averaging and mode-seeking behavior. Similarly, \algo{OT-MPC} replaces the KL divergence, but with an optimal transport objective that incorporates spatial information rather than adjusting the entropy's tail behavior. Other VI-MPC methods leverage gradient information: Stein variational approaches~\cite{Lambert21, Honda24} use the score function to update particles, and recent DDP-based methods~\cite{Aoyama24} use second-order derivatives. Gradient-free variants of \algo{SVGD} exist~\cite{Han18} but require careful selection of an auxiliary distribution and suffer from high-variance updates. In contrast, \algo{OT-MPC} is zeroth-order and only requires  cost evaluations.

\subsection{Optimal Transport in Robotics and Control}
Optimal transport has been used in robotics to formulate control objectives---steering multi-agent systems to goal configurations~\cite{Ito23}, covariance steering for robust planning~\cite{Okamoto19, Yin22, Saravanos24, Ratheesh25}, and imitation learning~\cite{Dadashi21, Papagiannis22}. These applications use OT to define \emph{what} to achieve; \algo{OT-MPC} instead uses OT to determine \emph{how} to solve the control problem.

Diffusion models~\cite{Ho20}, now widely used for policy learning~\cite{Janner22, Chi25}, can be viewed as solving an optimal transport problem between noise and data distributions. However, diffusion transports particles independently via a learned score function. \algo{SCD} instead computes an explicit coupling between particles and proposals, with marginal constraints that coordinate updates and prevent mode collapse. 

The most similar existing algorithm to \algo{OT-MPC} is \algo{MPOT}~\cite{Le23}, which also uses the Sinkhorn algorithm for trajectory optimization. The key distinction is where the task objective enters the formulation: \algo{MPOT} encodes the control cost directly in the transport cost matrix, steering waypoints toward globally low-cost regions, and \algo{OT-MPC} encodes costs through the marginal weights and reserves the transport cost for geometric proximity. This separation enables particles to move toward nearby promising proposals rather than distant optima, providing the local refinement that avoids mode-averaging and respects the geometry of the space.

\section{Background: Optimal Transport}
\label{sec:background}
% Background Section for Sinkhorn MPC
% Include with \input{sections/background}

%\begin{figure*}
%    \centering
%    \includegraphics[width=0.9\linewidth]{images/multi_modal_quad.png}
%    \caption{Explain Algorithm}
%    \label{fig:placeholder}
%\end{figure*}

\begin{figure*}
    \centering
    \includegraphics[width=1\linewidth]{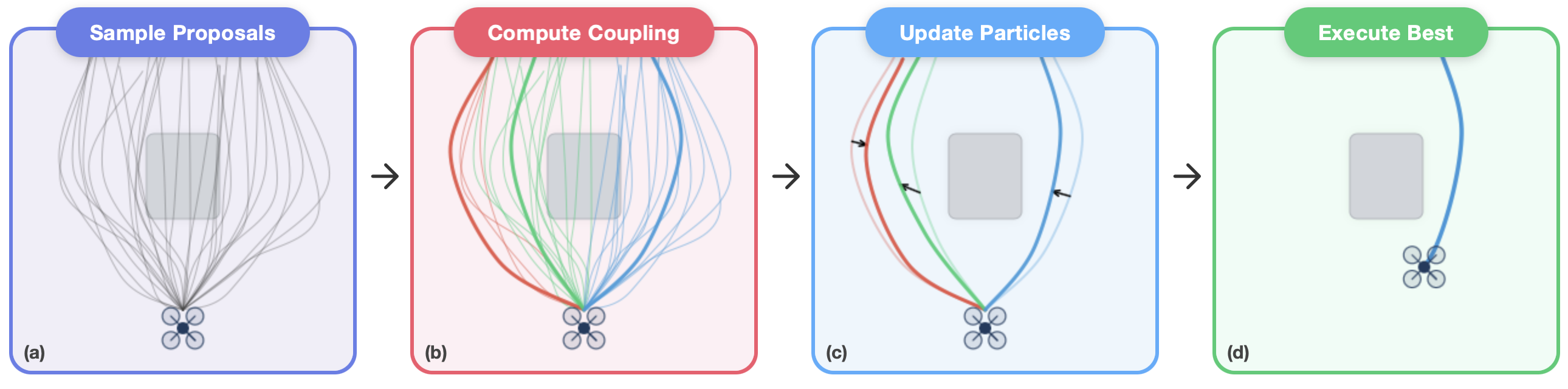}
    \caption{\textbf{Overview of \algo{OT-MPC}}. \textbf{(a)} Proposals are sampled to explore the trajectory space. \textbf{(b)} The Sinkhorn algorithm computes a \emph{soft coupling} between candidates (bold curves) and proposals factoring in both cost and proximity. Colors indicate which to which proposals the candidate is most strongly coupled. \textbf{(c)} Each candidate updates toward its coupled proposals via a \emph{barycentric projection} \cref{eq:barycentric}---refining locally while preserving distinct modes. \textbf{(d)} The lowest-cost candidate is executed. Unlike \algo{MPPI}, which averages all samples globally, \algo{OT-MPC} couples each candidate to nearby proposals---avoiding the mode-averaging that would steer directly into the obstacle and enabling local refinement of candidates within each mode.}
    \label{fig:overview}
\end{figure*}

Optimal transport (OT) finds the minimum-cost coupling between two distributions~\cite{Villani08, Peyre19}. For probability mass functions $q \in \set{\Delta}^N$ and $p \in \set{\Delta}^M$, a coupling is a joint distribution $\Gamma \in \R_+^{N \times M}$ with marginals $\Gamma 1_M = q$ and $\transp{\Gamma} 1_N = p$; denote this set $\set{\Gamma}(q, p)$. Given a cost matrix $C \in \R^{N \times M}$, the OT problem minimizes total transport cost. When the cost reflects distance, OT metrizes the space of distributions---for instance, yielding the Wasserstein distance when $C_{ij} = \|x_i - y_j\|^2$. Unlike the KL divergence, which compares distributions pointwise without regard to the underlying space, OT incorporates geometric structure through the cost matrix.

The OT problem is a linear program with $\mathcal{O}(N^3 \log N)$ complexity, but entropy regularization enables efficient approximate solutions via the Sinkhorn algorithm~\cite{Cuturi13}. The entropic OT (EOT) problem is:
\begin{align}
    \Sink(q, p) = \minimize_{\Gamma \in \set{\Gamma}(q,p)} \sum_{i,j} C_{ij} \Gamma_{ij} - \varepsilon H(\Gamma), \label{eq:eot}
\end{align}
where $H(\Gamma) \define -\sum_{i,j} \Gamma_{ij}(\log \Gamma_{ij} - 1)$ is the entropy. The unique solution has the form $\opt{\Gamma}_{ij} = u_i K_{ij} v_j$ with $K_{ij} = \exp(-C_{ij}/\varepsilon)$, where the scaling vectors $u, v$ are computed by alternating projections onto the marginal constraints (\Cref{alg:sinkhorn}). As $\varepsilon \to 0$, the coupling becomes sparse, deterministic coupling $\opt{\Gamma}$ as converges to the unregularized OT solution. As $\varepsilon \to \infty$, the coupling ignores the transport cost and $\opt{\Gamma} \to q \transp{p}$. 

\begin{algorithm}[t]
\caption{Sinkhorn Algorithm (\algo{Sink})}
\label{alg:sinkhorn}
\KwIn{Cost matrix $C \in \R^{N \times M}$, marginals $q \in \R^N$, $p \in \R^M$, regularization $\varepsilon > 0$.}
\KwOut{Optimal coupling $\opt{\Gamma} \in \R^{N \times M}$.}
\BlankLine
$K_{ij} \gets \exp(-C_{ij}/\varepsilon)$\;
$u, v \gets 1_N, 1_M$\ \tcp*[r]{Can warm start}
\Repeat{$\|\Gamma 1_M - q\|_1 < \delta$}{%
$u \gets q\,/\,(K v)$\;
$v \gets p\,/\,(\transp{K} u)$\;
}
$\opt{\Gamma}_{ij} \gets u_i K_{ij} v_j$\;
\end{algorithm}

\section{Optimal Control Problem Formulation}
\label{sec:problem formulation}
This section formulates the optimal control problems solved by \algo{OT-MPC} using the control-as-inference perspective~\cite{Attias03, Lambert21, Wang21}, which is mathematically equivalent to the free energy duality in the \algo{MPPI} literature~\cite{Williams17, Williams18}.

Consider a deterministic discrete-time system with state $x_t \in \R^n$, control $u_t \in \R^m$, initial condition $x_0$, and dynamics:
\begin{align}
    x_{t+1} = F(x_t, u_t), \quad t = 0, \dots, t_f. \label{eq:dynamics}
\end{align}
Let $\set{X}$ and $\set{U}$ denote the sets of state and control sequences of horizon $t_f$. The task is specified by a cost function \mbox{$J: \set{X} \times \set{U} \to \R_+$}. A common choice is,
\begin{align}
J(\seq{x}, \seq{u}) = \sum_{t=0}^{t_f} \ell(x_t) + \frac{1}{2}\transp{u}_t R u_t,
\end{align}
though $J(\seq{x}, \seq{u})$ need not be continuous or differentiable. Let $\Phi(\seq{u}; x_0)$ denote the \emph{rollout} map, which returns the state-control trajectory satisfying~\cref{eq:dynamics}. The optimal control problem is:
\begin{align}
    \minimize_{\seq{u} \in \set{U}} \quad S(\seq{u}; x_0) \define (J \circ \Phi)(\seq{u}; x_0). \tag{P} \label{eq:ocp}
\end{align}

\subsection{Optimal Control via Variational Inference}
The \algo{OT-MPC} algorithm solves \cref{eq:ocp} using \algo{SCD} (\Cref{sec:scd}), which approximately samples from a target distribution that concentrates probability mass at the minima of $S(\seq{u}; x_0)$. This target is derived using the control-as-inference framework, which reformulates \cref{eq:ocp} as a Bayesian inference problem in which $\seq{u}$ are the latent variables to be inferred.

Select a prior $P(\seq{u})$ and define a binary random variable $o \in \{0, 1\}$ to indicate whether controls $\seq{u}$ are optimal. Optimal sequences can be generated by sampling from the posterior:
\begin{align}
    \opt{P}(\seq{u}) \coloneqq P(\seq{u} | o = 1) = \frac{P(o = 1|\seq{u})P(\seq{u})}{P(o = 1)}.
\end{align}
Since sampling from this posterior is intractable, we seek a variational approximation by finding the distribution in a tractable family $\set{Q} \subseteq \set{\Delta}(\set{U})$ closest in KL divergence:
\begin{align}
    \minimize_{Q \in \set{Q}}\quad \KL\qty[Q; \opt{P}] = \E_Q\qty[\log\frac{P(o = 1)Q(\seq{u})}{P(o = 1|\seq{u})P(\seq{u})}].
\end{align}
Rearranging yields the \emph{evidence lower bound} (ELBO):
\begin{align}
    \sup_{Q \in \set{Q}}\ \E_Q[\log P(o = 1 | \seq{u})] - \KL\qty[Q; P] \leq \KL\qty[Q; \opt{P}].\label{eq:elbo}
\end{align}
The standard choice of likelihood is the exponentiated cost:
\begin{align}
    P(o = 1 | \seq{u}) \propto \exp(-\beta S(\seq{u})),
\end{align}
under which the ELBO becomes:
\begin{align}
    \minimize_{Q \in \set{Q}}\ \E_Q[S(\seq{u})] + \frac{1}{\beta} \KL\qty[Q; P]. \label{eq:nonequilibrium}
\end{align}
When $\set{Q} = \set{\Delta}(\set{U})$, the solution is the Gibbs measure:
\begin{align}
    \opt{Q}(\seq{u}) \propto \exp(-\beta S(\seq{u}))P(\seq{u}).\label{eq:gibbs}
\end{align}
The inverse temperature $\beta$ controls concentration: as $\beta \to 0$, $Q \to P$; as $\beta \to \infty$, probability concentrates at the global minima of $S$. In practice, $\opt{Q} \notin \set{Q}$, so algorithms like \algo{OT-MPC} and \algo{MPPI} approximate it within $\set{Q}$.

\subsection{The Path Integral Method for Sampling Controls}
\label{ssec:mppi}

This section briefly describes how \algo{MPPI} solves \cref{eq:nonequilibrium}. The objective~\cref{eq:nonequilibrium} is equivalent to the free energy variational inequality in the \algo{MPPI} literature~\cite{Williams17, Williams18}. The distinction between the two inequalities is only in terminology.

The \algo{MPPI} algorithm restricts $\set{Q}$ to Gaussian distributions with fixed covariance:
\begin{align}
    u_t = \bar{u}_t + \delta u_t, \quad \delta u_t \sim \dist{N}(0, \Sigma_t),
\end{align}
where $\seq{\Sigma} = (\Sigma_t)_{t=0}^{t_f}$ is a known covariance sequence and $\bar{\seq{u}} = (\bar{u}_t)_{t=0}^{t_f}$ is the mean to be optimized. Since $\opt{Q}$ cannot be sampled directly, importance sampling approximates the minimum mean square error (MMSE) estimator:
\begin{align}
    \E_{\opt{Q}}[\seq{u}] &= \frac{\E_Q\qty[\seq{u}\exp(-\beta S(\seq{u}))]}{\E_Q\qty[\exp(-\beta S(\seq{u}))]}\label{eq:mppi is}\\
    &\approx \sum_{j=1}^M \underbrace{\frac{\exp(-\beta S(\seq{u}_j))}{\sum_{k=1}^M \exp(-\beta S(\seq{u}_k))}}_{w_j \define} \cdot \seq{u}_j, \nonumber
\end{align}
where $\seq{u}_j = \bar{\seq{u}} + \delta\seq{u}_j$ with $\delta\seq{u}_j \sim \dist{N}(0, \seq{\Sigma})$. This yields the \algo{MPPI} update:
\begin{align}
    \bar{\seq{u}}^{(l+1)} \gets \bar{\seq{u}}^{(l)} + \sum_{j=1}^M w_j \cdot \delta\seq{u}_j.\label{eq:mppi}
\end{align}

\section{Sinkhorn Coordinate Descent}
\label{sec:scd}
% Method Section: Sinkhorn Coordinate Descent
% Include with \input{sections/method}

This section describes \emph{Sinkhorn Coordinate Descent} (\algo{SCD}), a gradient-free algorithm that evolves $N$ particles \mbox{$\seq{z} = (z_i)_{i=1}^N$} toward the target distribution $\opt{Q}(z)$ using $M$ proposals \mbox{$\seq{y} = (y_j)_{j=1}^M$}  sampled from a reference distribution $R(\seq{y}|\seq{x})$. Note that the reference may optionally depend on the particle values. Unlike importance sampling, which computes a single global average, \algo{SCD} incorporates geometric information through the EOT cost computed via \Cref{alg:sinkhorn}. This section presents \algo{SCD} independently of the control setting; \cref{sec:ot-mpc} instantiates it for MPC.

The proposals are sampled from a distribution $R(\seq{y}|\seq{z})$, which is optionally conditioned on the particle values. The target marginal $p(\seq{y}) \in \set{\Delta}^M$ is defined via self-normalizing importance sampling:
\begin{align}
    p_j(\seq{y}) = \frac{\opt{Q}(y_j)}{\sum_{k=1}^M \opt{Q}(y_k)}.
\end{align}
The particle marginal $q(\seq{z}) \in \set{\Delta}^N$ can be set arbitrarily, e.g., it can be defined analogously to $p(\seq{y})$ or a uniform distribution to encourage exploration. Together with a cost function $c: \R^n \times \R^n \to \R$, these define an EOT problem over particle positions:
\begin{align}
    \oper{L}^c_{\varepsilon}(\seq{z}, \Gamma; \seq{y}) &\define \sum_{i,j} c(z_i, y_j) \Gamma_{ij} - \varepsilon H(\Gamma)\\
    \EOT^c(\seq{z}, \seq{y})\ &\define\ \minimize_{\Gamma \in \set{\Gamma}(q, p)}\quad \oper{L}^c_{\varepsilon}(\seq{z}, \Gamma; \seq{y}).\label{eq:eot part}
\end{align}

The \algo{SCD} algorithm solves this EOT problem via alternating optimization of particles and coupling:
\begin{subequations}
\label{eq:scd}
\begin{align}
    \seq{z}^{(k+1)} &\gets \argmin_{\seq{z}} \oper{L}^c_{\varepsilon}(\seq{z}, \Gamma^{(k)}; \seq{y}),\label{eq:scd particles}\\
    \Gamma^{(k+1)} &\gets \argmin_{\Gamma \in \set{\Gamma}^{(k+1)}} \oper{L}^c_{\varepsilon}(\seq{z}^{(k+1)}, \Gamma; \seq{y}), \label{eq:scd coupling}
\end{align}
\end{subequations}
where $\set{\Gamma}^{(k+1)} \define \set{\Gamma}(q(\seq{z}^{(k+1)}), p(\seq{y}))$ is the coupling constraint induced by the current particles. The coupling update \cref{eq:scd coupling} is solved efficiently via \Cref{alg:sinkhorn}. The particle update \cref{eq:scd particles} generally requires first-order optimization, with gradients available via the envelope theorem~\cite{Milgrom02, Peyre19}. However, for quadratic costs the solution is closed-form, e.g:

\begin{proposition}[Barycentric Update]
When $c(z, y) = \|z - y\|_2^2$, the minimizer of $\seq{z} \mapsto \oper{L}^c_{\varepsilon}(\seq{z}, \Gamma; \seq{y})$ is the barycentric projection:
\begin{align}
    \opt{z}_i = \sum_{j=1}^M \Gamma_{ij} y_j \bigg/ \sum_{k=1}^M \Gamma_{ik}.\label{eq:barycentric}
\end{align}
\end{proposition}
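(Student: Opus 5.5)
The plan is to fix $\Gamma$ and $\seq{y}$ and observe that the entropy term $-\varepsilon H(\Gamma)$ does not depend on $\seq{z}$. Minimizing $\seq{z} \mapsto \oper{L}^c_{\varepsilon}(\seq{z}, \Gamma; \seq{y})$ is therefore the same as minimizing the transport term
\begin{align*}
    \sum_{i=1}^N \sum_{j=1}^M \Gamma_{ij} \|z_i - y_j\|_2^2 .
\end{align*}
This sum separates across particles, with no coupling between different $z_i$, so the problem splits into $N$ independent subproblems $\min_{z_i} f_i(z_i)$, where $f_i(z_i) \define \sum_j \Gamma_{ij}\|z_i - y_j\|_2^2$.

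For each $i$, write $a_i \define \sum_{k} \Gamma_{ik}$. Then $f_i$ is a quadratic with Hessian $2 a_i I$. Its gradient is $\nabla f_i(z_i) = 2\sum_j \Gamma_{ij}(z_i - y_j) = 2(a_i z_i - \sum_j \Gamma_{ij} y_j)$.

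The main point to check is positivity of $a_i$, since this gives strict convexity and a unique minimizer. Here $\Gamma \in \set{\Gamma}(q,p)$ has nonnegative entries, so $a_i = q_i$ is the $i$-th row marginal. The minimizer is unique if $q_i > 0$, which holds for the uniform choice of $q$ and for the importance-weight choice, since exponential weights are positive. Moreover, the entropic solution $\Gamma_{ij} = u_i K_{ij} v_j$ has strictly positive entries whenever the marginals are positive. If some $q_i = 0$, the objective does not depend on $z_i$ and any value is a minimizer, so I would add the standing assumption $q_i > 0$ to make the proposition well posed.

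With $a_i > 0$, setting the gradient to zero gives $z_i^\star = \sum_j \Gamma_{ij} y_j / a_i$, which is exactly \cref{eq:barycentric}. Strict convexity makes this stationary point the unique global minimizer. Alternatively, completing the square gives
\begin{align*}
    f_i(z_i) = a_i \|z_i - \opt{z}_i\|_2^2 + \text{const},
\end{align*}
which shows global optimality directly without appealing to convexity.
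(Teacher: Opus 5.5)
Your proof is correct and follows essentially the same route as the paper, which obtains this case from its Generalized Barycentric Update in Appendix B: decouple the objective across particles, solve the first-order condition $\sum_j \Gamma_{ij}(z_i - y_j) = 0$ after dividing by the row sum, and invoke strict convexity for uniqueness. Your explicit check that $\sum_k \Gamma_{ik} = q_i > 0$, needed both for the division and for strict convexity of the weighted sum, and your completing-the-square alternative are useful additions that the paper leaves implicit.
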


Appendix  \ref{appendix:b} establishes general conditions for closed-form updates and lists additional cases useful in robotics, e.g., spline parameters and elements of $\set{SE}(3)$.

\begin{algorithm}[t]
\caption{Sinkhorn Coordinate Descent (\algo{SCD})}
\label{alg:scd}
\KwIn{Init. particles $\seq{z}^{(0)}$, proposals $\seq{y}$, weights $q \in \set{\Delta}^N$, $p \in \set{\Delta}^M$, parameters $\varepsilon > 0$, $\eta \in (0, 1]$.}
\KwOut{Updated particles $\seq{z}$.}
\BlankLine
\Repeat{Convergence}{%
$\seq{y} \gets$ Resample from $R$ in parallel (Optional)\;
$C_{ij} \gets \frac{1}{2}\|z_i^{(k)} - y_j\|^2$\;
$\Gamma \gets \algo{Sink}(C, q(\seq{z}^{(k)}), p(\seq{y}), \varepsilon)$\tcp*[l]{\cref{eq:scd coupling}}
\For{$i = 1, \ldots, N$}{
    $b_i \gets \sum_{j=1}^M \Gamma_{ij} y_j \big/ \sum_{k=1}^M \Gamma_{ik}$\tcp*[l]{\cref{eq:barycentric}}
    $z_i^{(k+1)} \gets (1 - \eta)\, z_i^{(k)} + \eta\, b_i$\tcp*[l]{\cref{eq:scd particles}}
}
}
\end{algorithm}

% \begin{algorithm}[t]
% \caption{Sinkhorn Coordinate Descent (\algo{SCD})}
% \label{alg:scd}
% \KwIn{Init. particles $\seq{x}^{(0)}$, proposals $\seq{y}$, weights $q \in \set{\Delta}^N$, $p \in \set{\Delta}^M$, parameters $\varepsilon > 0, \eta \in (0, 1]$.}
% \KwOut{Updated particles $\seq{x}$.}
% \BlankLine
% \Repeat{Convergence}{%
% $\seq{y} \gets$ Resample $\seq{y}$ from $R(\seq{y}| \seq{z}^{(k)})$ (Optional)\;
% $C_{ij} \gets \frac{1}{2}\|z_i^{(k)} - y_j\|^2$\;
% $\Gamma \gets \algo{Sink}(C, q(\seq{z}^{(k)}), p(\seq{y}^{(k)}), \varepsilon)$\tcp*[l]{Eq.\Cref{eq:scd coupling}}
% \For{$i = 1, \ldots, N$}{
%     $b_i \gets \sum_{j = 1}^M\Gamma_{ij} y_{j} \big/ \sum_{k = 1}^M \Gamma_{ik}$\tcp*[r]{Eq.\Cref{eq:barycentric}}
%     $z_i^{(k+1)} \gets (1 - \eta)\, z_i^{(k)} + \eta\, b_i$\tcp*[r]{Eq.\Cref{eq:scd particles}}
% }
% }
% \end{algorithm}

% \begin{figure}[!b]
% \centering
% \includegraphics[width=\columnwidth]{images/barycenter_cartoon.png}%
%   \caption{\textbf{Barycentric projection.} Candidates (squares) couple to proposals (circles) with strength indicated by line thickness. Each candidate updates toward the weighted centroid of its coupled proposals.}%
%   \label{fig:barycentric}%
% \end{figure}

\textit{\textbf{Properties and Behavior.}} The complete \algo{SCD} algorithm is listed in \Cref{alg:scd}. In practice, the particle update \cref{eq:scd particles} is relaxed with step size $\eta \in (0, 1]$ to improve stability and reduce variance when proposals are resampled each iteration or $\varepsilon$ is very small.

Notably, \algo{SCD} generalizes the importance sampling procedure underlying \algo{MPPI}. When $N = 1$, the marginal constraint forces $\Gamma_{1j} = p_j$, and the barycenter \cref{eq:barycentric} reduces to the global weighted average in \cref{eq:mppi is}. Similarly, as $\varepsilon \to \infty$, the coupling factors as $\Gamma_{ij} = q_i p_j$, which causes the barycenter update in \cref{eq:barycentric} to be independent of the particle identity---causing all particles to have the same barycenter.  In both limits, the barycentric projection becomes an importance-weighted average and \algo{SCD} recovers \algo{MPPI}.

The algorithm enjoys favorable theoretical properties when proposals are fixed. The most important of which are stated concisely in the following proposition. Full details, additional theoretical results, and proofs in the Appendix  \ref{appendix:a}.
\begin{proposition}
When $\seq{y}$ are fixed and $z \mapsto c(z, y)$ is convex:
\begin{enumerate}[leftmargin=*, itemsep=1pt, topsep=1pt, label=\roman*.]
    \item The objective $\oper{L}^c_{\varepsilon}(\seq{z}, \Gamma; \seq{y})$ is biconvex in $(\seq{z}, \Gamma)$.
    \item The sequence $\qty(\oper{L}^c_{\varepsilon}(\seq{z}^{(k)}, \Gamma^{(k)}; \seq{y}))_{k \geq 0}$ is non-increasing.
    \item The iterates $(z^{(k)}, \Gamma^{(k)})$ converge to a stationary point.
\end{enumerate}
\end{proposition}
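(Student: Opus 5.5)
We prove the three claims in order. Throughout, $\seq{y}$ and the target marginal $p=p(\seq{y})$ are fixed. We interpret the particle marginal $q$ as fixed (e.g.\ uniform), so that the feasible set $\set{\Gamma}(q,p)$ does not change with $k$. If $q$ depends on $\seq{z}$, the argument below goes through only when $q$ is held at its value for the current iterate. We make this standing assumption explicit, since otherwise (ii) can fail.

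\textbf{(i) Biconvexity.} Fix $\Gamma\in\set{\Gamma}(q,p)$. The map $\seq{z}\mapsto \sum_{i,j} c(z_i,y_j)\Gamma_{ij}$ is a nonnegative combination of functions $z_i\mapsto c(z_i,y_j)$, each convex by hypothesis. It is therefore convex. The term $-\varepsilon H(\Gamma)$ does not depend on $\seq{z}$.

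Now fix $\seq{z}$. The first term is linear in $\Gamma$. The term $-\varepsilon H(\Gamma)=\varepsilon\sum_{ij}\Gamma_{ij}(\log\Gamma_{ij}-1)$ is strictly convex on $\R_+^{N\times M}$, since $t\mapsto t\log t - t$ has second derivative $1/t>0$. The feasible set $\set{\Gamma}(q,p)$ is a polytope, hence convex. So $\oper{L}^c_\varepsilon$ is biconvex on $\R^{nN}\times\set{\Gamma}(q,p)$.

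\textbf{(ii) Monotone descent.} We use the standard block-coordinate sandwich. By \cref{eq:scd particles}, $\seq{z}^{(k+1)}$ minimizes $\oper{L}^c_\varepsilon(\cdot,\Gamma^{(k)};\seq{y})$, so
\[
\oper{L}^c_\varepsilon(\seq{z}^{(k+1)},\Gamma^{(k)};\seq{y})\le\oper{L}^c_\varepsilon(\seq{z}^{(k)},\Gamma^{(k)};\seq{y}).
\]
Because the constraint set is fixed, $\Gamma^{(k)}$ is feasible for \cref{eq:scd coupling}. Hence
\[
\oper{L}^c_\varepsilon(\seq{z}^{(k+1)},\Gamma^{(k+1)};\seq{y})\le\oper{L}^c_\varepsilon(\seq{z}^{(k+1)},\Gamma^{(k)};\seq{y}).
\]
Chaining the two inequalities gives (ii).

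We also extend this to the relaxed step $z_i^{(k+1)}=(1-\eta)z_i^{(k)}+\eta b_i$ in \Cref{alg:scd}. Convexity in $\seq{z}$ gives
\[
\oper{L}(\text{relaxed iterate})\le(1-\eta)\,\oper{L}(\seq{z}^{(k)})+\eta\,\oper{L}(\opt{\seq{z}})\le\oper{L}(\seq{z}^{(k)}),
\]
so descent is preserved for every $\eta\in(0,1]$. Finally, the sequence is bounded below: $c\ge0$ (or $c$ bounded below) and $-\varepsilon H$ is bounded below on the compact simplex. Therefore the values converge.

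\textbf{(iii) Convergence to a stationary point.} This is the main obstacle, because monotone descent alone does not imply convergence of the iterates. We plan to invoke the biconvex alternate convex search result of Gorski--Pfeuffer--Klamroth. That result states: if the sublevel set of the initial iterate is compact and each block subproblem has a unique minimizer, then every accumulation point is a partial optimum, hence stationary.

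To apply it, we check the hypotheses as follows.
\begin{itemize}[leftmargin=*]
\item \emph{Uniqueness in $\Gamma$.} This follows from strict convexity of $-\varepsilon H$ (the Sinkhorn solution $u_iK_{ij}v_j$ is unique).
\item \emph{Uniqueness in $\seq{z}$.} Each row of the Sinkhorn coupling has $\sum_j\Gamma_{ij}=q_i>0$ and all entries are positive. If $c(\cdot,y)$ is strictly convex, as for the quadratic cost, the $\seq{z}$-minimizer is therefore unique; for the quadratic cost it is given explicitly by \cref{eq:barycentric}. For merely convex $c$ we would add a small proximal term or assume strict convexity.
\item \emph{Compactness.} $\Gamma$ lives in a compact polytope. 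The barycentric update places each $z_i^{(k)}$ in the convex hull of $\seq{y}$ for $k\ge1$, which is bounded. For general $c$ we would require coercivity of $c(\cdot,y)$.
\end{itemize}

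Bounded iterates have accumulation points. By continuity of $\oper{L}$ and of both argmin maps (the Sinkhorn map is continuous in $C$, and the barycentric map is continuous in $\Gamma$), every accumulation point $(\bar{\seq{z}},\bar\Gamma)$ is a fixed point of the alternation. Such a point satisfies $0\in\partial_{\seq{z}}\oper{L}$ and the KKT conditions in $\Gamma$; since $\oper{L}$ is differentiable (with $\Gamma>0$), this means it is a stationary point.

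Upgrading this to convergence of the whole sequence is harder. One route is a Kurdyka--Łojasiewicz argument, since $\oper{L}$ is semi-algebraic/analytic for the quadratic cost (Attouch--Bolte). The other is to show that $\|\seq{z}^{(k+1)}-\seq{z}^{(k)}\|\to0$ via the sufficient-decrease inequality coming from strong convexity in $\seq{z}$. We would state the accumulation-point result rigorously and cite KL for full-sequence convergence.
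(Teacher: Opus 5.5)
Your proposal is correct and follows the same route as the paper's appendix proof: blockwise convexity for (i), the two-inequality sandwich with $q$ held fixed for (ii), and for (iii) boundedness of the iterates via the convex hull of $\seq{y}$, a lower bound on $\oper{L}^c_\varepsilon$, and continuity of the block minimizers to show that accumulation points are partial optima. Where you depart from the paper you are more careful than it is: your convexity inequality justifies the relaxed step; you note that uniqueness of the $\seq{z}$-subproblem needs strict convexity of $c$, which the paper's appeal to biconvexity alone does not give; and you flag that convergence of the whole sequence needs something like a Kurdyka--{\L}ojasiewicz argument rather than the generic ``standard results'' the paper cites (though $\|\seq{z}^{(k+1)}-\seq{z}^{(k)}\|\to 0$ by itself would not suffice for that).
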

\noindent The biconvexity of $\oper{L}^c_{\varepsilon}(\seq{z}, \Gamma; \seq{y})$ ensures each subproblem has a unique solution. As a descent method, each iteration maintains or improves the solution. The convergence guarantee ensures the algorithm terminates at a well-defined fixed point rather than oscillating. Together, these properties make \algo{SCD} well-suited for real-time MPC: the algorithm can be stopped after a fixed iteration budget with the \emph{guarantee that each iteration has improved the solution}.

{\itshape \bfseries Exploration-Exploitation Tradeoff.} The transport structure shapes algorithmic behavior in several ways. First, particles remain in the convex hull of proposals---exploration is limited by proposal coverage, making the choice of reference distribution $R(\seq{y}|\seq{z})$ important. Second, marginal constraints coordinate updates: each particle must distribute its mass $q_i$ across proposals, and the transport cost biases this allocation toward nearby proposals. This prevents particles from collapsing to a single mode when proposals cover multiple basins. However, if proposals themselves cluster in one basin---e.g., when resampling around already-converged particles---diversity can still collapse.

For these reasons, the choice of $R(\seq{y}|\seq{z})$ and the resampling frequency significantly shape \algo{SCD}'s behavior. Sampling proposals near each particle enables local refinement but reduces exploration, potentially causing all particles to converge to the best known local minimum. Depending on the application, this can be desirable---especially when combined with annealing---since committing to the best discovered mode after sufficient exploration is a reasonable strategy when gradient information is unavailable.

\textbf{\textit{Stopping Criteria.}} When early stopping is preferred, some options for practical convergence criteria include,
\begin{align*}
    &\textit{Particle Displacement:} \hfill & \max_i \|z_i^{(k+1)} - z_i^{(k)}\| &< \delta,&\\
    &\textit{Relative Improvement:} \hfill & (\oper{L}^{(k)} - \oper{L}^{(k+1)})/\oper{L}^{(k)} &< \delta.&
\end{align*}
\begin{figure*}[!t]
    \centering
    \includegraphics[width=1\linewidth]{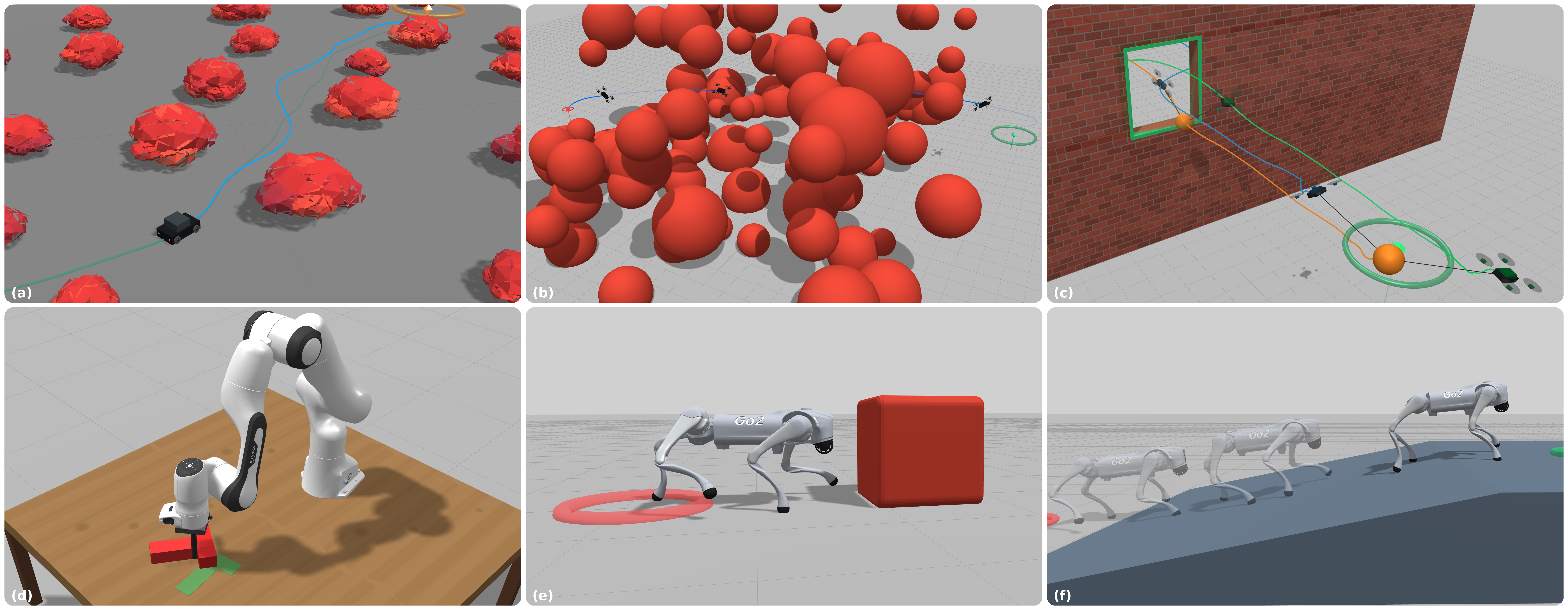 }
    \caption{Robotics control experiments used to evaluate \algo{OT-MPC}. (a) Kinematic Bicycle navigating through an obstacle field. (b) Quadrotor navigating from start to goal through cluttered environment. (c) Two-Quadrotor system cooperatively carrying a suspended load through an opening in the wall. (d) Push-T task using Franka where the manipulator has to push and align the T-Block to a goal location (e) Unitree Go2 box pushing task, where the quadruped has to push the box to a goal location (f) Unitree Go2 locomotion task of climbing an inclined ramp. }
    \label{fig:experiments}
\end{figure*}

\section{Model-Predictive Control via Entropic Optimal Transport}
\label{sec:ot-mpc}
% Section 4: Sinkhorn MPC
% Include with \input{sections/sinkhorn_mpc}

This section instantiates \algo{SCD} for control, creating \emph{Optimal Transport MPC} (\algo{OT-MPC}). We specify the particle representation, proposal distribution, and computational trade-offs.

\subsection{Trajectory Optimization via Sinkhorn Coordinate Descent}
\label{ssec:traj opt}

Each particle $z_i = (u_i^{0}, u_i^{1}, \ldots, u_i^{t_f-1})$ represents a candidate control sequence. The proposal weights follow the Gibbs distribution $p_j \propto \exp(-\beta S(y_j; x_0))$, where $\beta > 0$ is an inverse temperature controlling concentration at low-cost proposals. Particles receive uniform weights $q_i = 1/N$ to encourage exploration. At each MPC cycle, we run $K$ iterations of \algo{SCD}, drawing fresh proposals each iteration, then execute the first control from the lowest-cost particle. Standard warm-starting applies: particles are shifted forward in time and the final segment is reinitialized.

\subsection{Proposal Distribution}
\label{ssec:proposals}

The proposal distribution $R(\seq{y} | \seq{z})$ balances local refinement against global exploration. We use a mixture to sample proposals:
\begin{align}
    R(y|\seq{z}) = (1 - \rho) \cdot \frac{1}{N} \sum_{i=1}^{N} \mathcal{N}(y; z_i, \Sigma) + \rho \cdot R_{\mathrm{global}}(y), \label{eq:proposal mixture}
\end{align}
where $\rho \in [0, 1]$ controls the exploration rate. The first component perturbs existing particles, enabling local refinement around promising solutions. The second component $R_{\mathrm{global}}$ provides global coverage---either uniform over the control bounds or a broad Gaussian centered at zero.

The perturbation covariance $\Sigma$ can be isotropic ($\sigma^2 I$) or structured to reflect problem geometry. For trajectory optimization, temporal correlations often improve sample quality: perturbations that vary smoothly across timesteps produce dynamically coherent candidates, whereas independent noise at each timestep yields low-quality erratic trajectories.

\subsection{Hyperparameter Selection}
\label{ssec:hyperparameters}

\algo{OT-MPC} introduces three key hyperparameters beyond those shared with \mppi{}: the entropy regularization $\varepsilon$, the relaxation parameter $\eta$, and the number of particles $N$.

The regularization $\varepsilon$ controls coupling sparsity. Small $\varepsilon$ yields near-deterministic assignment where each particle couples primarily to its nearest low-cost proposal; this accelerates convergence but risks premature commitment. Large $\varepsilon$ spreads coupling mass broadly, maintaining diversity but slowing refinement. We find $\varepsilon$ in the range $[0.01, 0.1]$ times the median pairwise distance works well across tasks.

The relaxation parameter $\eta \in (0, 1]$ governs step size toward the barycenter. Full steps ($\eta = 1$) converge fastest when proposals are fixed, but cause oscillation when proposals are resampled each iteration. Damped updates ($\eta \approx 0.5$) provide stability at the cost of slower convergence. In practice, $\eta$ between $0.3$ and $0.7$ balances these concerns.

The particle count $N$ determines the capacity to represent multimodal structure. Too few particles collapse to a single mode; too many incur unnecessary coupling cost. We observe diminishing returns beyond $N \approx 10$--$20$ for problems with two to four distinct modes. Since the coupling cost scales as $\mathcal{O}(NM)$, a practical heuristic is to set $N \ll M$, using many proposals for exploration but few particles to track the discovered modes.

\section{Experiments}
\label{sec:experiments}

We evaluate \algo{OT-MPC} across a diverse set of robotics control tasks ranging from navigation, locomotion to manipulation. The experiments are designed to test and evaluate whether the optimal transport coupling improves the performance on tasks with multimodal cost landscapes that generally cause other sampling based methods to struggle. 

\textbf{\textit{Baselines.}} We benchmark against \algo{MPPI}, \algo{CEM}  and Stein Variational Model-Predictive Control (\algo{SV-MPC}) on the lower-dimensional tasks such as bicycle navigation and planar Push-T task. We decided to exclude \algo{CEM}  and \algo{SV-MPC}  from higher dimensional system for the following reasons. As the state dimensions increase the \algo{CEM}'s elite selection mechanism performs poorly as it discards majority of the cost information and fails to capture complex cost landscapes. \algo{SV-MPC}, which builds on \algo{SVGD} requires differentiable and smooth dynamics and cost functions which may not be feasible for complex and larger dimensional systems that includes rich contact dynamics and sparse cost structure (e.g. indicator functions for collision). As \algo{OT-MPC} and \algo{MPPI}  are both zeroth order sampling based methods, \algo{OT-MPC} can be fairly compared against \algo{MPPI}.

%\textit{\textbf{MPPI  Variants.}} Although, there are numerous variants of \algo{MPPI}  including colored noise \algo{MPPI}  \cite{Vlahov24}, log-space formulation \cite{mohamed2022autonomous} and recent improvements using annealing schemes \cite{Xue24, schramm2025reference}, we focus on a fair comparison with the vanilla \algo{MPPI}. This is because these variants do not generally change the underlying \algo{MPPI}  update mechanism, and most of these enhancements can be directly adapted to \algo{OT-MPC} to confer the same benefits.

\textit{\textbf{MPPI  Variants.}}  OT-MPC generalizes MPPI in the cases $N = 1$ or $\varepsilon \to \infty$ because SCD recovers the MPPI importance-weighted average exactly (Section~V). The two methods share the same high-level structure---sample proposals, roll out dynamics, evaluate costs, update candidate controls. They differ only in how the candidates integrate sample information: MPPI uses a global weighted average while OT-MPC uses optimal transport. Comparing against vanilla MPPI therefore \emph{isolates the contribution of the OT-based update rule}. Most enhancements to MPPI, e.g., colored noise \cite{Vlahov24}, annealing schedules \cite{Xue24, schramm2025reference}, and log-space formulations \cite{mohamed2022autonomous}, improve proposal generation and scoring but do not modify the aggregation step, and thus \emph{transfer directly to OT-MPC without modification}.

\textbf{\textit{Hyperparameter Tuning.}} To ensure fair comparison, the hyperparameters for all control algorithm are tuned using Optuna \cite{akiba2019optuna} with the same hyperparameter tuning objective and whenever possible with the same sampling budget. Complete details on \emph{cost structures, hyperparameter values, implementation details and additional visualization of successes and failures} are provided in the Appendix. Videos and an interactive 3D visualizer for qualitative inspection of trajectories are available on the project page.

\subsection{Car (Bicycle) Obstacle Avoidance}

We first evaluate \algo{OT-MPC} on a car navigation task using a kinematic bicycle model with states $(x,y, \theta, v)$ representing the position, heading and velocity of the system and with linear acceleration and steering angle as control inputs. The objective is to navigate the car from a starting location to a goal location through a dense obstacle field avoiding collision. We benchmark \algo{OT-MPC} against \algo{MPPI}, \algo{CEM}, and \algo{SV-MPC} across 300 Monte Carlo trials with randomly generated initializations, goal locations and obstacle field at two difficulty levels: \textit{Easy} (sparser obstacles) and \textit{Hard} (denser obstacle field). The hyperparameters of all the controllers were tuned using Optuna for a fair comparison. Table \ref{tab:benchmark} shows the benchmark results and we can see that \algo{OT-MPC} achieves the highest success rate (99\% on Easy and 93.5\% on Hard), outperforming \algo{MPPI}  and other control algorithms. The performance gap widens in the \textit{Hard} settings where the denser obstacles create more multimodal cost landscapes and control schemes like \algo{MPPI}  struggle due to mode-averaging. Other details regarding the parameters, cost structure and breakdown of the benchmark results can be found in the Appendix  \ref{appendix:c}.

\subsection{2D Push-T}

The planar Push-T task is a well-known manipulator task used to evaluate controllers in a multimodal scenario \cite{Chi25}. The objective here is to push and align a T-shaped block from a randomly perturbed initial location to a goal location using a circular pusher. This is an inherently difficult task for standard MPC and sampling based controllers due to the hybrid contact dynamics and sparse costs. Here the states of the system are the pusher position as well as the position and orientation of the T-block and we use velocity control to move the pusher. We benchmark our \algo{OT-MPC} with other controllers across 50 randomly generated initial and goal configuration for the T-block. The results of the benchmark are provided in Table \ref{tab:benchmark}. We can clearly observe the superior performance of \algo{OT-MPC} (76\% success) compared to the other control algorithms. Again here \algo{MPPI}  struggles due to mode-averaging in this contact-rich settings where there might be multiple viable solutions. \algo{SV-MPC}  was not able to solve this problem, due to the lack of meaningful gradients from the hybrid contact dynamics.

\subsection{Quadrotor Dense Obstacle Avoidance}

In order to evaluate \algo{OT-MPC} in higher dimensional systems, we have considered the 12DOF Quadrotor with thrust-torque control navigating in a dense obstacle  field from a starting location to a goal avoiding collision. We tested our \algo{OT-MPC} controller against \algo{MPPI}  across 100 trials each on three different environment difficulty settings - \textit{Easy} (50 obstacles), \textit{Medium} (100 obstacles), \textit{Hard} (100 obstacles in tighter configuration). As shown in Table \ref{tab:benchmark} \algo{OT-MPC} outperform \algo{MPPI}  in both \textit{Medium} (100\% vs 60\%) and \textit{Hard} (92\% vs 19\%) settings. We observed that the low success rate in \algo{MPPI}  is not due to collision with obstacle but rather due to \algo{MPPI} failing to find a feasible path in this dense obstacle field and often gets stuck in local minima while OT-MPC's multimodal approach helps alleviate this issue.

\subsection{Two Quadrotor Cooperative Load Carrying}

\begin{table}[!t]
    \footnotesize
    \centering
    \begin{tabular}{lc c c  }
    \toprule
        \textbf{Task} & \textbf{Controller} & \textbf{Success (\%)} & \textbf{Avg. Steps}  \\
    \midrule
        \multirow{4}{*}{\makecell[l]{Car Obstacle \\ \textit{Easy}}}
        & \algo{OT-MPC} & \textbf{99.00} & $76.1 \pm 21.6$  \\
        & \algo{MPPI} & 95.00 & $\mathbf{57.2 \pm 5.7}$  \\
        & \algo{CEM} & 70.00 & $79.5 \pm 15.2$  \\
        & \algo{SV-MPC} & 93.00 & $84.8 \pm 22.4$  \\
    \midrule
        \multirow{4}{*}{\makecell[l]{Car Obstacle \\ \textit{Hard}}}
        & \algo{OT-MPC} & \textbf{93.50} & $89.5 \pm 33.8$  \\
        & \algo{MPPI} & 88.50 & $\mathbf{61.7 \pm 24.4}$  \\
        & \algo{CEM} & 56.00 & $82.8 \pm 14.5$  \\
        & \algo{SV-MPC} & 78.50 & $105.0 \pm 46.2$  \\
    \midrule
    \multirow{4}{*}{2D Push-T}
        & \algo{OT-MPC} & \textbf{76.00} & $\mathbf{33.08 \pm 11.35}$  \\
        & \algo{MPPI} & 4.00 & $83.0 \pm 1.0 ^\dagger$   \\
        & \algo{CEM} & 46.00 & $47.26 \pm 19.95$  \\
        & \algo{SV-MPC} & 0.00 & ---  \\
    \midrule
        \multirow{2}{*}{\makecell[l]{Franka Push-T}}
        & \algo{OT-MPC} & \textbf{66.00} & $52.67 \pm 35.03$  \\
        & \algo{MPPI} & 64.00 & $\mathbf{46.69 \pm 17.87}$  \\
    \midrule
        \multirow{2}{*}{\makecell[l]{Quad. Obstacle \\ \textit{Easy}}}
        & \algo{OT-MPC} & \textbf{100.00} & $119.7 \pm 19.9$  \\
        & \algo{MPPI} & 100.00 & $\mathbf{93.6 \pm 9.3}$  \\
    \midrule
        \multirow{2}{*}{\makecell[l]{Quad. Obstacle \\ \textit{Medium}}}
        & \algo{OT-MPC} & \textbf{100.00} & $159.6 \pm 33.3$  \\
        & \algo{MPPI} & 60.00 & $\mathbf{127.4 \pm 23.3}$  \\
    \midrule
        \multirow{2}{*}{\makecell[l]{Quad. Obstacle \\ \textit{Hard}}}
        & \algo{OT-MPC} & \textbf{92.00} & $217.8 \pm 56.6$  \\
        & \algo{MPPI} & 19.00 & $\mathbf{147.4 \pm 13.6}$  \\
    \midrule
    \multirow{2}{*}{\makecell[l]{Quad. Carry \\ \textit{Normal}}}
        & \algo{OT-MPC} & \textbf{91.00} & $\mathbf{143.93 \pm 27.28}$  \\
        & \algo{MPPI} & 22.00 & $371.45 \pm 44.65$  \\
    \midrule
    \multirow{2}{*}{\makecell[l]{Quad. Carry \\ \textit{Hard}}}
        & \algo{OT-MPC} & \textbf{75.00} & $\mathbf{202.13 \pm 89.75}$  \\
        & \algo{MPPI} & 10.00 & $369.2 \pm 42.22$  \\
    \bottomrule
    \end{tabular}
    \caption{Benchmark results comparing \algo{OT-MPC} (ours) with various control schemes on a variety of robotics control tasks. The average steps for each task is computed using \emph{only successful runs}. $\dagger$ The low standard deviation is due to the fact that \algo{MPPI} only completed two runs successfully.}
    \label{tab:benchmark}
\end{table}

We have extended our quadrotor experiment to two-quadrotor system trying to cooperatively carry a suspended load from an initial location to a goal. Here we have a 27DOF system (Two 12DOF quadrotors and 3D position of the suspended load) with a 6-dimensional control - thrust-torque control for each quadrotor but with yaw torque control disabled due to cable constraints.  To increase the complexity of the experiment, we have designed the environment in which the two-quadrotor system must navigate through an opening in the wall to reach the other side (goal). We have benchmark \algo{MPPI}  and \algo{OT-MPC} in two different difficulty setting with locations of holes and initial states of the system randomized - \textit{Normal} (2.0m x 2.0m opening) and \textit{Hard} (1.2m x 1.4m opening). The benchmark result from 100 runs (each) are shown in Table \ref{tab:benchmark} and we observe that \algo{OT-MPC} performs exceptionally well in both \textit{Normal} (91 \% vs 22\%) and \textit{Hard} (75\%, 10\%) when compared to \algo{MPPI} . Here the narrow opening creates a bottleneck where both quadrotors need to coordinate precisely to navigate through and \algo{MPPI} 's mode-averaging disrupts this coordination.

\subsection{Franka Push-T}

We extended the planar Push-T task to full 3D manipulation setting using a Franka Panda arm and end-effector poking stick. The Franka state is 14-dimensional (joint positions and velocities) with 7-dimensional joint position control. Along with the T-block states (13 dimensional - position, quaternion, velocity and angular velocity) and End-Effector states (7-dimensional - End-effector position and quaternion), the total system is 34-dimensional. We parameterize the control trajectories using cubic splines to produce smooth joint motion. The objective here is to push a T-Block from an initial pose to a goal using the end-effector stick. 
We benchmark over 50 trials with randomized T-Block configurations. MPPI initially showed low success rate on this task, requiring significantly more hyperparameter tuning effort compared to OT-MPC to achieve competitive performance. After extensive tuning, as shown in Table \ref{tab:benchmark} we were able to reach similar success rate (66\% for OT-MPC vs 64\% for MPPI). This result shows that OT-MPC can achieve good performance with less tuning in complex tasks and can probably outperform MPPI, if similar tuning effort is used.

\subsection{Quadruped Locomotion}

We also evaluated the effectiveness of our controller on a simulated Unitree Go2 quadruped on locomotion task in different terrains. The Go2 quadruped is 37-dimensional system where the control inputs are the joint position of 12 leg joints. We employ a cubic hermite spline parameterization \cite{schramm2025reference} to sample smooth joint positions and velocities that enables us to produce coherent motions and the gaits emerge naturally from the sampling-based optimization rather than requiring reference trajectories or predefined gaits. We tuned our cost function weights for both \algo{MPPI}  and \algo{OT-MPC} in flat terrain to produce stable walking gait. To test generalization capability, we then evaluate these algorithms on out of distribution environments like inclined ramp (mild inclination) and narrow bridge crossing.
The results are shown in Table \ref{tab:locomotion}. Since here we want to evaluate the locomotion performance, we have decided to report the average steps and median distance to goal across 5 terrain configurations. Both \algo{MPPI}  and \algo{OT-MPC} achieve stable locomotion, however, \algo{OT-MPC} achieves significantly lower average steps compared to \algo{MPPI}  indicating superior locomotion performance. 

\subsection{Quadruped Box Pushing}

We have also extended our Quadruped locomotion task to a contact rich loco-manipulation task where the quadruped is tasked with pushing a box from a starting location to a goal location. Here with the addition of the box, the state space expands to 37 robot states and 13 box states (position, quaternions, velocity and angular velocities). We benchmark the performance of \algo{MPPI}  and \algo{OT-MPC} across 100 trials with randomized initial robot pose and box initializations (based on difficulty. See Appendix  \ref{appendix:c} for details).  Table \ref{tab:locomotion} shows the benchmark results and we can see that \algo{OT-MPC} performs better while maintaining lower average steps and median distance to goal.

\subsection{Computational Overhead}

For tasks with JAX-vectorized rollouts, Table~\ref{tab:compute_time} reports per-iteration and wall-clock times. OT-MPC achieves comparable or faster wall-clock times on most tasks despite the additional Sinkhorn step. This is because Optuna consistently selects fewer proposals per OT-MPC iteration. The transport coupling extracts more information per sample, reducing the number of rollouts needed. Since rollouts dominate compute, this efficiency more than compensates for the Sinkhorn overhead.
 
For the Franka and Go2 experiments, sequential MuJoCo rollouts dominate wall-clock time, making overall timings unrepresentative of algorithmic cost. To isolate the OT-MPC overhead, we measured the per-iteration cost of the Sinkhorn solve and barycentric update: 1.6\,ms for Franka ($8 \times 800$ coupling) and 0.2\,ms for Go2 ($8 \times 50$).

\begin{table}[!t]
    \centering
    \begin{tabular}{l c r@{${}\pm{}$}l c}
    \toprule
        \textbf{Task} & \textbf{Controller} & \multicolumn{2}{c}{\textbf{Avg. Steps}} & \parbox{2cm}{\centering \textbf{Median Goal Dist.}}\\
    \midrule
    \multirow{2}{*}{Locomotion}
        & \algo{OT-MPC} & $\textbf{427.8}$ & $\textbf{89.06}$ & \textbf{0.205}\\
        & \algo{MPPI} & $799.2$ & $150.72$ & 0.249 \\
    \midrule
    \multirow{2}{*}{Box Pushing}
        & \algo{OT-MPC} & $\textbf{101.0}$ & $\textbf{23.0}$ & \textbf{0.084} \\
        & \algo{MPPI} & $124.9$ & $65.4$ & 0.16\\
    \bottomrule
    \end{tabular}
    \caption{Results of Go2 Quadruped locomotion tasks. In the locomotion task both control methods achieved 100\% success when tested in 5 different terrain configurations. In the box pushing task, \algo{OT-MPC} achieved a 78\% success while \algo{MPPI} achieved 71\% across 100 Monte Carlo runs.}
    \label{tab:locomotion}
\end{table}

\begin{table}[t]
    \centering
    \footnotesize
    \begin{tabular}{l c c  c  c c }
    \toprule
      \multirow{2}{*}{Environment} & \multicolumn{2}{c}{Time / Iter. (ms)} & \multicolumn{2}{c}{Wall Clock (ms)} \\
        \cmidrule(lr){2-3} \cmidrule(lr){4-5}
         & MPPI & OT MPC & MPPI & OT MPC \\
        \midrule
        Car & 3.07 & 4.73 & 24.57 & 27.81   \\
        2D Push-T & 1.44 & 0.95 & 5.75 & 15.12 \\
        Quadrotor  & 67.6 & 26.6 & 338.0 & 133.0 \\
        Two Quad Carry & 23.1 & 12.4 & 230.7 & 99.1 \\
        \bottomrule
    \end{tabular}
    \caption{\footnotesize Computation times for JAX-vectorized environments. Each method uses its Optuna-tuned sample count (e.g., Quadrotor: MPPI uses 2000 proposals; OT-MPC uses 200).}
    \label{tab:compute_time}
    \vspace{-2.2em}
\end{table}

\section{Conclusion}
\label{sec:conclusion}
This paper addresses the fundamental limitations of existing sampling-based optimal control algorithms, e.g., \algo{MPPI} and \algo{CEM}, caused by their information-theoretic foundations. The variational principles from which they are derived cannot incorporate geometric information, resulting in solutions that blend distinct modes or commit prematurely to one. To ameliorate these limitations, we derive a novel sampling algorithm, Sinkhorn Coordinate Descent (\algo{SCD}), founded in optimal transport rather than information theory, and instantiate it in a model-predictive control scheme (\algo{OT-MPC}). The optimal transport cost structure incorporates geometric proximity---enabling both local refinement and mode preservation behaviors not found in existing methods.

Moreover, we established theoretical properties of \algo{SCD} including monotone descent and convergence guarantees, and demonstrated empirically that \algo{OT-MPC} outperforms \algo{MPPI}, \algo{CEM}, and \algo{SV-MPC} across navigation, manipulation, and locomotion tasks. The performance gap is most pronounced in settings with multimodal cost landscapes---e.g., dense obstacle fields, contact-rich manipulation, and coordinated multi-robot control---where existing methods struggle.

\textit{\textbf{Limitations.}} \algo{OT-MPC} incurs additional computational cost from Sinkhorn iterations compared to \algo{MPPI}. Its complexity scales as $\oper{O}(NM)$ per iteration compared to the $\oper{O}(M)$ for an \algo{MPPI} update---however, \algo{OT-MPC} did not incur a significant overhead for the quantities of particles used in our experiments ($N \approx 10$--$20$). The algorithm's performance depends on the entropy regularization $\varepsilon$. Small values risk premature commitment while excessively large values slow refinement. We provided practical guidelines, but adaptive scheduling remains an open question. Finally, because particles update toward barycenters of proposals, exploration is fundamentally limited by proposal coverage---poor initialization or overly local sampling can still cause diversity collapse.

\textit{\textbf{Future Work.}} There are a number of exciting avenues for future work. As a novel sampling algorithm, \algo{SCD} has broad application beyond \algo{OT-MPC} and would benefit from detailed comparisons to other sampling methods to identify its strengths and weaknesses across problem classes. There are also various algorithmic extensions to explore. For example, there are opportunities to dynamically adapt the proposal distribution by extending particles from points to Gaussian distributions or leveraging natural connections between OT and Laguerre tessellations to create a semi-discrete algorithm. Finally, the theoretical results we established for \algo{SCD} are for the setting where proposals are fixed---generalizing them to situations where proposals are resampled can better inform our understanding of this algorithm.

\bibliographystyle{unsrtnat}
\bibliography{references}

@book{Villani08,
  title={Optimal transport: old and new},
  author={Villani, C{\'e}dric},
  volume={338},
  year={2008},
  publisher={Springer}
}

@inproceedings{Cuturi13,
  title={Sinkhorn distances: Lightspeed computation of optimal transport},
  author={Cuturi, Marco},
  booktitle={Advances in Neural Information Processing Systems},
  volume={26},
  year={2013}
}

@article{Peyre19,
  title={Computational optimal transport: With applications to data science},
  author={Peyr{\'e}, Gabriel and Cuturi, Marco},
  journal={Foundations and Trends in Machine Learning},
  volume={11},
  number={5-6},
  pages={355--607},
  year={2019}
}

@article{Williams17,
  title={Model Predictive Path Integral Control: From theory to parallel computation},
  author={Williams, Grady and Wagener, Nolan and Goldfain, Brian and Drews, Paul and Rehg, James M. and Boots, Byron and Theodorou, Evangelos A.},
  journal={Journal of Guidance, Control, and Dynamics},
  volume={40},
  number={2},
  pages={344--357},
  year={2017}
}

@article{Rubinstein99,
  title={The cross-entropy method for combinatorial and continuous optimization},
  author={Rubinstein, Reuven Y.},
  journal={Methodology and Computing in Applied Probability},
  volume={1},
  number={2},
  pages={127--190},
  year={1999}
}

@article{Williams18,
  title={Information-theoretic {MPC} for model-based reinforcement learning},
  author={Williams, Grady and Wagener, Nolan and Goldfain, Brian and Drews, Paul and Rehg, James M. and Boots, Byron and Theodorou, Evangelos A.},
  journal={International Conference on Robotics and Automation},
  pages={1714--1721},
  year={2017}
}

@inproceedings{Yi24,
  title={{CoVO-MPC}: Theoretical analysis of sampling-based {MPC} and optimal covariance design},
  author={Yi, Zeji and Pan, Chaoyi and He, Guanqi and Qu, Guannan and Shi, Guanya},
  booktitle={Learning for Dynamics and Control Conference},
  pages={1122--1135},
  organization={PMLR},
  year={2024}
}

@article{Xue24,
  title={Full-order sampling-based {MPC} for torque-level locomotion control via diffusion-style annealing},
  author={Xue, Haoru and Pan, Chaoyi and Yi, Zeji and Qu, Guannan and Shi, Guanya},
  journal={arXiv preprint arXiv:2409.15610},
  year={2024}
}

@inproceedings{Kobilarov11,
  title={Cross-entropy randomized motion planning},
  author={Kobilarov, Marin},
  booktitle={Robotics: Science and Systems},
  year={2011}
}

@inproceedings{Pinneri20,
  title={Sample-efficient cross-entropy method for real-time planning},
  author={Pinneri, Cristina and Sawant, Shambhuraj and Blaes, Sebastian and Achterhold, Jan and Stueckler, Joerg and Rolinek, Michal and Martius, Georg},
  booktitle={Conference on Robot Learning},
  pages={1049--1065},
  organization={PMLR},
  year={2020}
}

@inproceedings{Wang21,
  title={Variational inference {MPC} using {T}sallis divergence},
  author={Wang, Ziyi and So, Oswin and Gibson, Jason and Vlahov, Bogdan and Gandhi, Manan S and Liu, Guan-Horng and Theodorou, Evangelos A},
  booktitle={Robotics: Science and Systems},
  year={2021}
}

@article{Ito23,
  title={Entropic model predictive optimal transport over dynamical systems},
  author={Ito, Kaito and Kashima, Kenji},
  journal={Automatica},
  volume={152},
  pages={110980},
  year={2023}
}

@inproceedings{Le23,
  title={Accelerating motion planning via optimal transport},
  author={Le, An T and Chalvatzaki, Georgia and Biess, Armin and Peters, Jan},
  booktitle={Advances in Neural Information Processing Systems},
  volume={36},
  year={2023}
}

@inproceedings{Honda24,
  title={{S}tein variational guided model predictive path integral control: Proposal and experiments with fast maneuvering vehicles},
  author={Honda, Kohei and Akai, Naoki and Suzuki, Kosuke and Aoki, Mizuho and Hosogaya, Hirotaka and Okuda, Hiroyuki and Suzuki, Tatsuya},
  booktitle={IEEE International Conference on Robotics and Automation},
  pages={8604--8610},
  year={2024}
}

@inproceedings{Alvarez25,
  title={Real-time whole-body control of legged robots with model-predictive path integral control},
  author={Alvarez-Padilla, Juan and Zhang, John Z. and Kwok, Sofia and Dolan, John M. and Manchester, Zachary},
  booktitle={International Conference on Robotics and Automation},
  pages={14721--14727},
  year={2025},
  organization={IEEE}
}

@article{Abraham20,
  title={Model-based generalization under parameter uncertainty using path integral control},
  author={Abraham, Ian and Handa, Ankur and Ratliff, Nathan and Lowrey, Kendall and Murphey, Todd D. and Fox, Dieter},
  journal={IEEE Robotics and Automation Letters},
  volume={5},
  number={2},
  pages={2864--2871},
  year={2020},
  publisher={IEEE}
}

@inproceedings{Keshavarz25,
  title={Control of Legged Robots using Model Predictive Optimized Path Integral},
  author={Keshavarz, Hossein and Ramirez-Serrano, Alejandro and Khadiv, Majid},
  booktitle={International Conference on Humanoid Robots},
  pages={1--8},
  year={2025},
  organization={IEEE}
}

@inproceedings{Hansen24,
title={{TD}-{MPC}2: {S}calable, Robust World Models for Continuous Control},
author={Nicklas Hansen and Hao Su and Xiaolong Wang},
booktitle={The Twelfth International Conference on Learning Representations},
year={2024}
}

@inproceedings{Okada20,
  title={Variational Inference {MPC} for {B}ayesian Model-Based Reinforcement Learning},
  author={Okada, Masashi and Taniguchi, Tadahiro},
  booktitle={Conference on robot learning},
  pages={258--272},
  year={2020},
  organization={PMLR}
}

@inproceedings{Lambert21,
  title={Stein Variational Model Predictive Control},
  author={Lambert, Alexander and Ramos, Fabio and Boots, Byron and Fox, Dieter and Fishman, Adam},
  booktitle={Conference on Robot Learning},
  pages={1278--1297},
  year={2021},
  organization={PMLR}
}

@inproceedings{Attias03,
  title={Planning by Probabilistic Inference},
  author={Attias, Hagai},
  booktitle={International Workshop on Artificial Intelligence and Statistics},
  pages={9--16},
  year={2003},
  organization={PMLR}
}

@article{Vlahov24,
  title={Low frequency sampling in model predictive path integral control},
  author={Vlahov, Bogdan and Gibson, Jason and Fan, David D. and Spieler, Patrick and Agha-mohammadi, Ali-akbar and Theodorou, Evangelos A.},
  journal={Robotics and Automation Letters},
  volume={9},
  number={5},
  pages={4543--4550},
  year={2024},
  publisher={IEEE}
}

@inproceedings{Bhardwaj22,
  title={{STORM}: {A}n Integrated Framework for Fast Joint-Space Model-Predictive Control for Reactive Manipulation},
  author={Bhardwaj, Mohak and Sundaralingam, Balakumar and Mousavian, Arsalan and Ratliff, Nathan D. and Fox, Dieter and Ramos, Fabio and Boots, Byron},
  booktitle={Conference on Robot Learning},
  pages={750--759},
  year={2022},
  organization={PMLR}
}

@inproceedings{Aoyama24,
  title={Generalized Maximum Entropy Differential Dynamic Programming},
  author={Aoyama, Yuichiro and Theodorou, Evangelos A.},
  booktitle={Conference on Decision and Control},
  pages={8825--8831},
  year={2024},
  organization={IEEE}
}

@inproceedings{Ratheesh25,
  title={Operator splitting covariance steering for safe stochastic nonlinear control},
  author={Ratheesh, Akash and Pacelli, Vincent and Saravanos, Augustinos D. and Theodorou, Evangelos A.},
  booktitle={Conference on Decision and Control},
  pages={3552--3559},
  year={2025},
  organization={IEEE}
}

@article{Okamoto19,
  title={Optimal Stochastic Vehicle Path Planning using Covariance Steering},
  author={Okamoto, Kazuhide and Tsiotras, Panagiotis},
  journal={Robotics and Automation Letters},
  volume={4},
  number={3},
  pages={2276--2281},
  year={2019},
  publisher={IEEE}
}

@inproceedings{Saravanos24,
  title={Distributed Model Predictive Covariance Steering},
  author={Saravanos, Augustinos D. and Balci, Isin M. and Bakolas, Efstathios and Theodorou, Evangelos A.},
  booktitle={International Conference on Intelligent Robots and Systems},
  pages={5740--5747},
  year={2024},
  organization={IEEE}
}

@inproceedings{Yin22,
  title={Trajectory Distribution Control for Model Predictive Path Integral Control Using Covariance Steering},
  author={Yin, Ji and Zhang, Zhiyuan and Theodorou, Evangelos and Tsiotras, Panagiotis},
  booktitle={International Conference on Robotics and Automation},
  pages={1478--1484},
  year={2022},
  organization={IEEE}
}

@article{Chi25,
  title={Diffusion Policy: {V}isuomotor policy learning via action diffusion},
  author={Chi, Cheng and Xu, Zhenjia and Feng, Siyuan and Cousineau, Eric and Du, Yilun and Burchfiel, Benjamin and Tedrake, Russ and Song, Shuran},
  journal={International Journal of Robotics Research},
  volume={44},
  number={10-11},
  pages={1684--1704},
  year={2025},
  publisher={Sage Publications}
}

@article{Ho20,
  title={Denoising Diffusion Probabilistic Models},
  author={Ho, Jonathan and Jain, Ajay and Abbeel, Pieter},
  journal={Advances in Neural Information Processing Systems},
  volume={33},
  pages={6840--6851},
  year={2020}
}

@inproceedings{Dadashi21,
  title={Primal {W}asserstein Imitation Learning},
  author={Dadashi, Robert and Hussenot, L{\'e}onard and Geist, Matthieu and Pietquin, Olivier},
  booktitle={International Conference on Learning Representations},
  year={2021}
}

@inproceedings{Papagiannis22,
  title={Imitation Learning with Sinkhorn Distances},
  author={Papagiannis, Georgios and Li, Yunpeng},
  booktitle={Joint European Conference on Machine Learning and Knowledge Discovery in Databases},
  pages={116--131},
  year={2022}
}

@inproceedings{Janner22,
  title={Planning with Diffusion for Flexible Behavior Synthesis},
  author={Janner, Michael and Du, Yilun and Tenenbaum, Joshua and Levine, Sergey},
  booktitle={International Conference on Machine Learning},
  pages={9902--9915},
  year={2022},
  organization={PMLR}
}

@article{Milgrom02,
  title={Envelope theorems for arbitrary choice sets},
  author={Milgrom, Paul and Segal, Ilya},
  journal={Econometrica},
  volume={70},
  number={2},
  pages={583--601},
  year={2002},
  publisher={Wiley}
}

@inproceedings{Han18,
  title={{S}tein variational gradient descent without gradient},
  author={Han, Jun and Liu, Qiang},
  booktitle={International Conference on Machine Learning},
  pages={1900--1908},
  year={2018},
  organization={PMLR}
}

@inproceedings{akiba2019optuna,
  title={Optuna: A next-generation hyperparameter optimization framework},
  author={Akiba, Takuya and Sano, Shotaro and Yanase, Toshihiko and Ohta, Takeru and Koyama, Masanori},
  booktitle={Proceedings of the 25th ACM SIGKDD international conference on knowledge discovery \& data mining},
  pages={2623--2631},
  year={2019}
}

@article{schramm2025reference,
  title={Reference-Free Sampling-Based Model Predictive Control},
  author={Schramm, Fabian and Fabre, Pierre and Perrin-Gilbert, Nicolas and Carpentier, Justin},
  journal={arXiv preprint arXiv:2511.19204},
  year={2025}
}

@article{mohamed2022autonomous,
  title={Autonomous navigation of agvs in unknown cluttered environments: log-mppi control strategy},
  author={Mohamed, Ihab S and Yin, Kai and Liu, Lantao},
  journal={IEEE Robotics and Automation Letters},
  volume={7},
  number={4},
  pages={10240--10247},
  year={2022},
  publisher={IEEE}
}

\newpage
\onecolumn
\appendices
\section{Proofs of Algorithm Properties}\label{appendix:a}

\begin{proposition}[Biconvexity]
When $\seq{y}$ are fixed and $z \mapsto c(z, y)$ is convex, the objective $\oper{L}^c_\varepsilon(\seq{z}, \Gamma; \seq{y})$ is biconvex in $(\seq{z}, \Gamma)$.
\end{proposition}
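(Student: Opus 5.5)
The plan is to check convexity in each block separately, with the other block held fixed. Biconvexity then follows by definition, once we also confirm that the domain is a product of convex sets or is treated blockwise.

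\emph{Convexity in $\seq{z}$ for fixed $\Gamma$.} Fix $\Gamma \in \R_+^{N\times M}$ and $\seq{y}$. The entropy term $-\varepsilon H(\Gamma)$ does not depend on $\seq{z}$, so it is a constant. The remaining term is
\[
\sum_{i,j}\Gamma_{ij}\, c(z_i,y_j).
\]
Each summand is a convex function of $z_i$, by the assumption that $z\mapsto c(z,y)$ is convex, and hence also convex in the full vector $\seq{z}$. The coefficients satisfy $\Gamma_{ij}\ge 0$. A nonnegative combination of convex functions is convex, so $\seq{z}\mapsto\oper{L}^c_\varepsilon(\seq{z},\Gamma;\seq{y})$ is convex on all of $\R^{n\times N}$. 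It even separates across particles, which is what later yields the per-particle barycentric update.

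\emph{Convexity in $\Gamma$ for fixed $\seq{z}$.} Fix $\seq{z}$. The transport term $\sum_{ij} C_{ij}\Gamma_{ij}$ with $C_{ij}=c(z_i,y_j)$ is linear in $\Gamma$. For the entropy term,
\[
-\varepsilon H(\Gamma)=\varepsilon\sum_{ij}\phi(\Gamma_{ij}),\qquad \phi(t)=t\log t - t,\quad \phi(0)=0.
\]
Here $\phi''(t)=1/t>0$ on $(0,\infty)$, and $\phi$ is continuous at $0$, so $\phi$ is convex on $[0,\infty)$; in fact it is strictly convex there. Since $\varepsilon>0$, the entropy term is a positive multiple of a separable sum of strictly convex functions, hence strictly convex on $\R_+^{N\times M}$. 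Adding the linear term preserves this. The feasible set $\set{\Gamma}(q,p)$ is the intersection of the nonnegative orthant with affine marginal constraints, so it is convex (a polytope). Strict convexity on this set also justifies the later remark that the coupling subproblem has a unique minimizer.

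\emph{Main obstacle: the domain.} The subtle point is the domain rather than the convexity computations. In \cref{eq:scd coupling} the constraint set $\set{\Gamma}(q(\seq{z}),p(\seq{y}))$ may depend on $\seq{z}$ through $q$, so the joint feasible set need not be a product set. I would handle this by stating biconvexity for $\oper{L}^c_\varepsilon$ as a function on $\R^{n\times N}\times\R_+^{N\times M}$, which is a product of convex sets, and possibly as an extended-value function. For each fixed $\Gamma$, the $\seq{z}$-section is all of $\R^{n\times N}$. For each fixed $\seq{z}$, the $\Gamma$-section is the convex orthant, or its intersection with the polytope $\set{\Gamma}(q,p)$ when $q$ is held fixed, e.g.\ uniform as in OT-MPC. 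I would also add a remark that $\oper{L}^c_\varepsilon$ is generally not jointly convex, since the bilinear coupling $c(z_i,y_j)\Gamma_{ij}$ destroys joint convexity. This explains why only biconvexity is claimed.
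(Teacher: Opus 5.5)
Your proposal is correct and follows the paper's proof essentially step for step. Convexity in $\seq{z}$ comes from a nonnegative combination of convex functions that decouples across particles, and strict convexity in $\Gamma$ comes from a linear term plus the negative entropy (second derivative $1/\Gamma_{ij}>0$) over the convex polytope $\set{\Gamma}(q,p)$. Your extra care is a genuine tightening that the paper's proof skips: you handle $\phi$ at $t=0$ via continuity, and you state the result on the product domain $\R^{n\times N}\times\R_+^{N\times M}$ because $q$ may depend on $\seq{z}$; neither changes the argument.
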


\begin{proof}
The objective decomposes as:
\begin{align}
    \oper{L}^c_\varepsilon(\seq{z}, \Gamma; \seq{y}) = \sum_{ij} c(z_i, y_j) \Gamma_{ij} - \varepsilon H(\Gamma).
\end{align}

\paragraph{Convexity in the Particles.} Each particle $z_i$ appears in the sum $\sum_j c(z_i, y_j) \Gamma_{ij}$. Since $\Gamma_{ij} \geq 0$ and $z \mapsto c(z, y_j)$ is convex by assumption, this is a non-negative weighted sum of convex functions, hence convex in $z_i$. Since the particles are decoupled, the full objective is convex in $\seq{z}$.

\paragraph{Convexity in the Coupling.} With $\seq{z}$ fixed, define the cost matrix $C_{ij} = c(z_i, y_j)$. The objective becomes:
\begin{align}
    \oper{L}^c_\varepsilon(\seq{z}, \Gamma; \seq{y}) = \inner{C}{\Gamma} - \varepsilon H(\Gamma),
\end{align}
where $\inner{C}{\Gamma} = \sum_{i,j} C_{ij} \Gamma_{ij}$ is linear in $\Gamma$. The negative entropy $-H(\Gamma) = \sum_{i,j} \Gamma_{ij}(\log \Gamma_{ij} - 1)$ is strictly convex since the Hessian with respect to $\Gamma_{ij}$ is $1/\Gamma_{ij} > 0$. The constraint set $\set{\Gamma}(q, p)$ is a polytope (intersection of linear constraints), hence convex. Therefore the objective is strictly convex in $\Gamma$.
\end{proof}

\vspace{0.25em}
\begin{proposition}[Monotone Descent]
\label{prop:descent}
When $\seq{y}$ are fixed and $q$ is independent of $\seq{z}$ (e.g., uniform), the sequence of objective values \mbox{$\{\oper{L}^c_\varepsilon(\seq{z}^{(k)}, \Gamma^{(k)}; \seq{y})\}_{k \geq 0}$} is non-increasing.
\label{prop:descent}
\end{proposition}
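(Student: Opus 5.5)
The plan is the standard block-coordinate descent argument: each half-step of \cref{eq:scd} is an exact minimization over one block with the other block held fixed. Each half-step therefore cannot increase $\oper{L}^c_\varepsilon$, provided the previous iterate is feasible for the current subproblem. Since $\seq{y}$ is fixed and $q$ does not depend on $\seq{z}$, the constraint set $\set{\Gamma}^{(k+1)} = \set{\Gamma}(q, p(\seq{y}))$ is the same set $\set{\Gamma}(q,p)$ for every $k$. This is exactly where the hypothesis on $q$ enters: it guarantees that $\Gamma^{(k)}$ remains feasible after the particles move.

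\emph{Particle step.} The update \cref{eq:scd particles} minimizes $\seq{z} \mapsto \oper{L}^c_\varepsilon(\seq{z}, \Gamma^{(k)}; \seq{y})$ over all $\seq{z}$ with no constraints. Its value at the minimizer is at most its value at the previous point $\seq{z}^{(k)}$, so
\begin{align*}
\oper{L}^c_\varepsilon(\seq{z}^{(k+1)}, \Gamma^{(k)}; \seq{y}) \le \oper{L}^c_\varepsilon(\seq{z}^{(k)}, \Gamma^{(k)}; \seq{y}).
\end{align*}
We need the minimizer to exist. Under the convexity assumption this is guaranteed in the quadratic case by the Barycentric Update proposition; in general we interpret $\argmin$ as any attained minimizer. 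If a relaxed step with $\eta<1$ is used, convexity in $\seq{z}$ (from the Biconvexity proposition) still gives descent:
\begin{align*}
\oper{L}((1-\eta)\seq{z}^{(k)}+\eta\,\opt{\seq{z}}) \le (1-\eta)\,\oper{L}(\seq{z}^{(k)})+\eta\,\oper{L}(\opt{\seq{z}}) \le \oper{L}(\seq{z}^{(k)}).
\end{align*}
We will note this as a remark.

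\emph{Coupling step.} The update \cref{eq:scd coupling} minimizes $\Gamma \mapsto \oper{L}^c_\varepsilon(\seq{z}^{(k+1)}, \Gamma; \seq{y})$ over $\set{\Gamma}(q,p)$. By the Biconvexity proposition this problem is strictly convex on a compact polytope, so the minimizer $\Gamma^{(k+1)}$ exists and is unique. For $k\ge 1$, $\Gamma^{(k)}$ was produced by the same Sinkhorn step and lies in $\set{\Gamma}(q,p)$. For $k=0$ we assume the initial coupling is feasible; for example, take $\Gamma^{(0)}$ to be the EOT solution for $\seq{z}^{(0)}$, or $qp^{\mathsf{T}}$. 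Then $\Gamma^{(k)}$ is a feasible competitor, and
\begin{align*}
\oper{L}^c_\varepsilon(\seq{z}^{(k+1)}, \Gamma^{(k+1)}; \seq{y}) \le \oper{L}^c_\varepsilon(\seq{z}^{(k+1)}, \Gamma^{(k)}; \seq{y}).
\end{align*}
Chaining the two inequalities gives $\oper{L}^{(k+1)} \le \oper{L}^{(k)}$.

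The main subtlety is not the inequalities but well-posedness. First, the feasibility of $\Gamma^{(k)}$ at step $k+1$ fails if $q$ depends on $\seq{z}$, which is why that hypothesis is stated. Second, \Cref{alg:sinkhorn} only approximately solves \cref{eq:scd coupling}. We will state the result for exact minimizers, i.e.\ the Sinkhorn fixed point with tolerance $\delta\to 0$, and remark that an inexact solve preserves descent only up to an $\mathcal{O}(\delta)$ error. We will also observe that $\oper{L}$ is bounded below on this set whenever $c$ is bounded below (e.g.\ $c\ge 0$), since $-H$ is bounded on the simplex. A monotone bounded sequence therefore converges, which sets up the later convergence statement.
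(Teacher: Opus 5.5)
Your proposal is correct and matches the paper's proof. Like the paper, you chain the particle-step and coupling-step minimization inequalities, using the $\seq{z}$-independence of $q$ precisely to keep $\Gamma^{(k)}$ feasible in $\set{\Gamma}^{(k+1)}$. Your convexity (Jensen) argument for the relaxed step $\eta<1$ and your explicit check that $\Gamma^{(0)}$ is feasible make rigorous two points the paper only asserts (``moves in a descent direction'') or leaves implicit.
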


\begin{proof}
Each iteration of Algorithm 2 consists of two updates:
\begin{align}
    \seq{z}^{(k+1)} &\gets \argmin_{\seq{z}} \oper{L}^c_\varepsilon(\seq{z}, \Gamma^{(k)}; \seq{y}), \\
    \Gamma^{(k+1)} &\gets \argmin_{\Gamma \in \set{\Gamma}^{(k+1)}} \oper{L}^c_\varepsilon(\seq{z}^{(k+1)}, \Gamma; \seq{y}).\nonumber
\end{align}
The particle update yields,
\begin{align}
    \oper{L}^c_\varepsilon(\seq{z}^{(k+1)}, \Gamma^{(k)}; \seq{y}) \leq \oper{L}^c_\varepsilon(\seq{z}^{(k)}, \Gamma^{(k)}; \seq{y}),
\end{align}
since $\seq{z}^{(k+1)}$ minimizes the objective over $\seq{z}$ and $\seq{z}^{(k)}$ is feasible.

The coupling update yields:
\begin{align}
    \oper{L}^c_\varepsilon(\seq{z}^{(k+1)}, \Gamma^{(k+1)}; \seq{y}) \leq \oper{L}^c_\varepsilon(\seq{z}^{(k+1)}, \Gamma^{(k)}; \seq{y}),
\end{align}
provided $\Gamma^{(k)} \in \set{\Gamma}^{(k+1)}$. When $q(\seq{z})$ is uniform (independent of particle values), this holds automatically. When $q$ depends on $\seq{z}$, the constraint set changes, but the Sinkhorn solution still minimizes over the updated constraint set.

Combining these inequalities:
\begin{align}
    \oper{L}^c_\varepsilon(\seq{z}^{(k+1)}, \Gamma^{(k+1)}; \seq{y}) \leq \oper{L}^c_\varepsilon(\seq{z}^{(k)}, \Gamma^{(k)}; \seq{y}).
\end{align}

When the relaxation parameter $\eta < 1$ is used, the update $z_i^{(k+1)} = (1-\eta) z_i^{(k)} + \eta b_i$ moves in a descent direction (toward the minimizer $b_i$), so descent is preserved.
\end{proof}

\vspace{0.25em}
\begin{remark}
When $q(\seq{z})$ depends on particle positions, the constraint set $\set{\Gamma}(q(\seq{z}), p)$ changes between iterations, so the previous coupling $\Gamma^{(k)}$ may not be feasible in $\set{\Gamma}^{(k+1)}$. In this case, monotone descent of $\oper{L}^c_\varepsilon$ is not guaranteed. However, the value function $\EOT^c(\seq{z}, \seq{y})$ still decreases. The particle update reduces $\oper{L}^c_\varepsilon$ for fixed $\Gamma^{(k)}$, and the subsequent coupling update can only further decrease the objective by re-optimizing over the new constraint set. The uniform setting $q_i = 1/N$ used in our experiments avoids this subtlety and produces a cleaner proposition.
\end{remark}

\vspace{0.25em}
\begin{proposition}[Convergence]
When $\seq{y}$ are fixed and $q$ is independent of $\seq{z}$, the iterates $(\seq{z}^{(k)}, \Gamma^{(k)})$ converge to a stationary point of $\oper{L}^c_\varepsilon(\seq{z}, \Gamma; \seq{y})$.
\end{proposition}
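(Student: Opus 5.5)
The plan is to treat \algo{SCD} with fixed $\seq{y}$ and fixed $q$ as two-block exact coordinate minimization of the biconvex function $\oper{L}^c_\varepsilon$ from the Biconvexity proposition. I would combine the Monotone Descent proposition with compactness, and then use a standard argument for block coordinate descent in which each block has a unique minimizer, in the style of Grippo--Sciandrone or Tseng. I work with $\eta = 1$ and state the relaxed case afterwards.

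\textbf{Step 1: boundedness of the iterates.} The coupling block lies in the compact polytope $\set{\Gamma}(q,p)$. For the particle block, note that $\seq{z}^{(k+1)}$ minimizes $\sum_j c(z_i,y_j)\Gamma^{(k)}_{ij}$, and the row sums are fixed at $q_i>0$. In the quadratic case of the Barycentric Update proposition, $z_i^{(k+1)}$ is a convex combination of the $y_j$, so it lies in their convex hull. For general convex $c$, I would assume coercivity of $c(\cdot,y)$ to get the same conclusion. Either way the sequence $(\seq{z}^{(k)},\Gamma^{(k)})$ stays in a compact set.

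\textbf{Step 2: properties of the Sinkhorn coupling.} By the Monotone Descent proposition, $\oper{L}^{(k)}$ is non-increasing. It is also bounded below, since $\Gamma$ ranges over a compact set and $c$ is continuous. Hence $\oper{L}^{(k)}$ converges to some $\oper{L}^\infty$. Next I would show that every coupling iterate is uniformly interior. Because $\Gamma^{(k)}_{ij}=u_iK_{ij}v_j$ with $K_{ij}=\exp(-c(z_i,y_j)/\varepsilon)$, and the costs are bounded on the compact set from Step 1, the entries satisfy $\Gamma^{(k)}_{ij}\ge \delta>0$ uniformly in $k$. This makes the strict convexity of $-\varepsilon H$ uniform, i.e. $\varepsilon/\delta$-strong convexity, on the relevant region.

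\textbf{Step 3: sufficient decrease and limit points.} By the strong convexity from Step 2,
\begin{align*}
\oper{L}(\seq{z}^{(k+1)},\Gamma^{(k)})-\oper{L}(\seq{z}^{(k+1)},\Gamma^{(k+1)})\ \ge\ \tfrac{\varepsilon}{2}\|\Gamma^{(k)}-\Gamma^{(k+1)}\|_F^2 ,
\end{align*}
with the $\ell^1$ norm giving the constant directly via Pinsker. In the quadratic case the $\seq{z}$-block is strongly convex with modulus $2q_i$, which gives the analogous bound $\ge \sum_i q_i\|z_i^{(k+1)}-z_i^{(k)}\|^2$. Summing over $k$ and using $\oper{L}^{(k)}\to\oper{L}^\infty$, both successive differences are square-summable and therefore tend to $0$.

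Now take any limit point $(\bar{\seq z},\bar\Gamma)$. Both block-argmin maps are continuous: the barycenter formula is continuous in $\Gamma$, and the EOT solution is continuous in $C$. Passing to the limit in the optimality conditions, $\bar{\seq z}$ minimizes $\oper{L}(\cdot,\bar\Gamma)$ and $\bar\Gamma$ minimizes $\oper{L}(\bar{\seq z},\cdot)$ over $\set{\Gamma}(q,p)$. This makes the limit a partial optimum. Since $\oper{L}$ is differentiable and the constraint set is a product of a convex set with $\R^{nN}$, a partial optimum satisfies the KKT conditions of the joint problem, so it is a stationary point.

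\textbf{Main obstacle.} The hard part is upgrading from ``every limit point is stationary'' to convergence of the whole sequence. Vanishing successive differences only imply that the set of limit points is compact and connected. I would first try a Kurdyka--{\L}ojasiewicz argument following Attouch--Bolte--Svaiter. In the quadratic case $\oper{L}$ is real-analytic on the interior region where $\Gamma\ge\delta$, and the constraints are linear, so $\oper{L}$ plus the indicator of the polytope is KL. Together with the sufficient decrease from Step 3 and a relative-error bound (the block gradients are Lipschitz on the compact region), this yields finite length $\sum_k\|(\seq z,\Gamma)^{(k+1)}-(\seq z,\Gamma)^{(k)}\|<\infty$, and hence convergence.

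\textbf{Relaxed step.} If the KL route is too heavy, I would state the result as ``every limit point is stationary'' and note isolation as a sufficient condition. For $\eta<1$ the same argument applies: the convex $\seq z$-subproblem still gives sufficient decrease, with a factor depending on $\eta$.
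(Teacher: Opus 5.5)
Your proposal is correct. Up to the conclusion that every accumulation point is a partial optimum, it follows the same skeleton as the paper: boundedness via the barycentric convex-hull argument, then convergence of the monotone, bounded-below objective values, then continuity of the two block-argmin maps.

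The genuine difference is the last step. The paper gets convergence of the whole sequence by appealing to ``standard results on alternating minimization for biconvex functions.'' Such results for alternate convex search only give that accumulation points are partial optima. Whole-sequence convergence needs something extra, such as isolatedness or a KL property, which is exactly the obstacle you identify. Your route actually delivers it. It combines:
\begin{itemize}
\item uniform interiority of the Sinkhorn couplings;
\item sufficient decrease in both blocks;
\item a relative-error bound coming from exact block minimization;
\item the {\L}ojasiewicz inequality for $\oper{L}^c_\varepsilon$ restricted to the marginal affine subspace, where it is real-analytic.
\end{itemize}
Together these give finite length and hence convergence. Your square-summability of successive differences also makes explicit what the paper compresses into ``by continuity of the subproblem solutions and the descent property.'' Passing to the limit in $\seq{z}^{(k+1)} = \argmin_{\seq z}\oper{L}^c_\varepsilon(\seq z,\Gamma^{(k)};\seq y)$ along a subsequence requires consecutive iterates to share the same limit.

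The price is that full convergence is established only for the quadratic (or analytic/definable) cost. However, the paper's own boundedness step already relies on the barycentric formula, so its nominal generality in $c$ is not really available either.

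One correction to Step 2. A lower bound $\Gamma_{ij}\ge\delta$ bounds the Hessian $\varepsilon\,\mathrm{diag}(1/\Gamma_{ij})$ of $-\varepsilon H$ from \emph{above} by $\varepsilon/\delta$. That is smoothness, not strong convexity. The $\varepsilon$-strong convexity you use in Step 3 comes instead from $\Gamma_{ij}\le 1$, or from Pinsker on the simplex. So the inequality you wrote is correct, but for a different reason than the one you give.

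The uniform lower bound is still needed, just for another purpose. It keeps the nonnegativity constraints inactive and $\oper{L}^c_\varepsilon$ analytic in a neighborhood of the limit set, which is what your KL step uses. You can make it explicit from the Sinkhorn fixed-point equations as $\Gamma_{ij}\ge q_i p_j \exp\bigl(-2(\max_{ij} C_{ij}-\min_{ij} C_{ij})/\varepsilon\bigr)$, with $C_{ij}=c(z_i,y_j)$ bounded on the compact set from Step 1.
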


\begin{proof}
We establish convergence via three observations:
\paragraph{Iterates are Bounded.} The barycentric update~(17) expresses each particle as a convex combination of proposals:
\begin{align}
    z_i^{(k+1)} = \sum_{j=1}^M \frac{\Gamma_{ij}}{\sum_{k=1}^M \Gamma_{ik}} y_j.
\end{align}
Since the weights sum to one and are non-negative, each $z_i^{(k+1)}$ lies in the convex hull of $\{y_1, \ldots, y_M\}$. The proposals are fixed, so the particles remain in a compact set for all $k$.

\paragraph{Objective is Bounded from Below.} The objective is bounded below since the transport cost $\sum_{i,j} c(z_i, y_j) \Gamma_{ij} \geq 0$ (assuming $c \geq 0$) and the entropy term $-\varepsilon H(\Gamma)$ is bounded below on the probability simplex.

\paragraph{Convergence of the Algorithm.} The sequence $\{\oper{L}^c_\varepsilon(\seq{z}^{(k)}, \Gamma^{(k)}; \seq{y})\}$ is monotonically non-increasing (\Cref{prop:descent}) and bounded below. Therefore, it is a convergent sequence. Since the iterates lie in a compact set, they have at least one accumulation point. By continuity of the subproblem solutions and the descent property, any accumulation point $(\opt{\seq{z}}, \opt{\Gamma})$ satisfies:
\begin{align}
    \opt{\seq{z}} &= \quad \argmin_{\seq{z}} \oper{L}^c_\varepsilon(\seq{z}, \opt{\Gamma}; \seq{y}), \\
    \opt{\Gamma} &= \quad \argmin_{\Gamma \in \set{\Gamma}(q(\opt{\seq{z}}), p)} \oper{L}^c_\varepsilon(\opt{\seq{z}}, \Gamma; \seq{y}).\nonumber
\end{align}
This is the definition of a stationary point (block coordinate-wise minimum) for the biconvex problem. Standard results on alternating minimization for biconvex functions guarantee that the full sequence converges to such a point.
\end{proof}

\newpage
\section{Generalized Barycentric Updates}\label{appendix:b}

Section~V in the main paper establishes a closed-form update for quadratic costs. Here, we present a generalization that covers a broader class of transport cost functions useful in robotics.

\begin{proposition}[Generalized Barycentric Update]
\label{prop:generalized-barycenter}
Let $c: \oper{Z} \times \oper{Z} \to \R$ be strictly convex in $z$ with gradient satisfying:
\begin{align}
    \partial_z c(z, y) = f(z) - g(y),
\end{align}
where $f: \set{Z} \to \set{V}$ and $g: \set{Z} \to \set{V}$, and $f(z)$ is invertible. Then the minimizer of $\seq{z} \mapsto \oper{L}^c_\varepsilon(\seq{z}, \Gamma; \seq{y})$ is:
\begin{align}
    \opt{z}_i = \inv{f}\Big( \sum_{j=1}^M w_{ij}\, g(y_j) \Big), \quad w_{ij} \define \frac{\Gamma_{ij}}{\sum_{k=1}^M \Gamma_{ik}}.
\end{align}
\end{proposition}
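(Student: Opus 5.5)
The plan is to exploit the fact that, for fixed $\Gamma$, the objective decouples across particles, and then solve each particle's subproblem through its first-order optimality condition. Since $-\varepsilon H(\Gamma)$ does not depend on $\seq{z}$, minimizing $\seq{z} \mapsto \oper{L}^c_\varepsilon(\seq{z}, \Gamma; \seq{y})$ is equivalent to minimizing, separately for each $i$,
\begin{align*}
    \phi_i(z) \define \sum_{j=1}^M \Gamma_{ij}\, c(z, y_j).
\end{align*}
This is exactly the decoupling used in the proof of biconvexity. Assuming row $i$ of $\Gamma$ has positive mass $\sum_k \Gamma_{ik} > 0$ (automatic when $q_i > 0$, e.g.\ uniform $q$), $\phi_i$ is a nonnegative combination of strictly convex functions with at least one positive weight. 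Hence $\phi_i$ is strictly convex, and any stationary point is its unique global minimizer.

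The next step is to compute the gradient using the structural assumption $\partial_z c(z,y) = f(z) - g(y)$:
\begin{align*}
    \partial_z \phi_i(z) = \sum_{j=1}^M \Gamma_{ij}\big(f(z) - g(y_j)\big) = \Big(\sum_{k=1}^M \Gamma_{ik}\Big) f(z) - \sum_{j=1}^M \Gamma_{ij}\, g(y_j).
\end{align*}
Setting this to zero and dividing by the positive row sum gives $f(z) = \sum_j w_{ij}\, g(y_j)$. Applying $\inv{f}$ then yields the claimed formula. As a sanity check, the quadratic case $c(z,y) = \|z-y\|_2^2$ has $f = g = 2\,\mathrm{id}$, which recovers the Barycentric Update proposition.

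The main obstacle is justifying that the point $\sum_j w_{ij}\, g(y_j)$ actually lies in the range of $f$, so that $\inv{f}$ is defined there. When $\set{V}$ is a vector space and $f$ is a bijection onto it, this is immediate. Otherwise I would argue via strict convexity. The map $f(z) = \partial_z c(z,y) + g(y)$ is (up to a constant shift) the gradient of a strictly convex function, so it is injective, and its image is convex whenever the conjugate is essentially smooth. Since each $g(y_j) = f(y_j) - \partial_z c(y_j, y_j)$, I would add the mild assumption that $\partial_z c(y,y) = 0$, i.e.\ $c(\cdot,y)$ is minimized at $z=y$. Then $g(y_j) = f(y_j)$ lies in the image of $f$, and the convex combination lies in the convex image. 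I would state this invertibility-on-the-convex-hull condition explicitly as the interpretation of ``$f$ invertible''.

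Finally, uniqueness follows from strict convexity, so the stationary point is the minimizer. For manifold-valued cases such as $\set{SE}(3)$, the same computation would be read in local coordinates or a Lie-algebra chart, with $\partial_z$ the Riemannian gradient.
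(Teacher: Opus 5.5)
Your proposal is correct and follows the same route as the paper's proof. You decouple across particles, impose the first-order condition, substitute $\partial_z c(z,y) = f(z) - g(y)$ to obtain $f(z_i) = \sum_j w_{ij}\, g(y_j)$, apply $f^{-1}$, and use strict convexity for uniqueness. Your additional checks, that the row mass $\sum_k \Gamma_{ik}$ is positive and that $\sum_j w_{ij}\, g(y_j)$ lies in the range of $f$, spell out points the paper's proof leaves implicit. Your fallback assumption $\partial_z c(y,y) = 0$ fails for the paper's regularized quadratic, but that case is already covered by your bijective-$f$ branch.
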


\begin{proof}
The objective decomposes across particles:
\begin{align}
    \oper{L}^c_\varepsilon(\seq{z}, \Gamma; \seq{y}) = \sum_{ij} c(z_i, y_j) \Gamma_{ij} - \varepsilon H(\Gamma).
\end{align}
For fixed $\Gamma$, minimizing over $z_i$ requires:
\begin{align}
    \partial_{z_i} \sum_{j=1}^M c(z_i, y_j) \Gamma_{ij} = \sum_{j=1}^M \partial_z c(z_i, y_j) \Gamma_{ij} = 0.
\end{align}
Substituting the separability condition:
\begin{align}
    \sum_{j=1}^M \left[ f(z_i) - g(y_j) \right] \Gamma_{ij} &= 0, \\
    f(z_i) \sum_{j=1}^M \Gamma_{ij} &= \sum_{j=1}^M g(y_j) \Gamma_{ij},\nonumber \\
    f(z_i) &= \sum_{j=1}^M w_{ij}\, g(y_j).\nonumber
\end{align}
Applying $\inv{f}$ to both sides yields the result. Strict convexity of $c$ in $z$ ensures this stationary point is the unique global minimizer.
\end{proof}

\begin{table}[b]
\centering
\renewcommand{\arraystretch}{1.8}
\begin{tabular}{@{}llcccl@{}}
\toprule
\textbf{Name} & \textbf{Cost} $c(z, y)$ & $f(z)$ & $g(y)$ & \textbf{Update} $\opt{z}_i$ & \textbf{Application} \\
\midrule
Weighted Quadratic & $\transp{(z - y)} W (z - y)$ & $Wz$ & $Wy$ & $\sum_j w_{ij} y_j$ & States, Controls, Splines \\[8pt]
Regularized Quadratic & $\|z - y\|^2 + \lambda \|z\|^2$ & $(1{+}\lambda)z$ & $y$ & $\frac{1}{1+\lambda} \sum_j w_{ij} y_j$ & Regularization \\[8pt]
Circular Distance & $2(1 - \cos(z - y))$ & $e^{iz}$ & $e^{iy}$ & $\arg\!\left(\sum_j w_{ij} e^{iy_j}\right)$ & Yaw, Joint Angles \\[8pt]
KL Divergence & $\sum_k z_k \log \frac{z_k}{y_k}$ & $\log z$ & $\log y$ & $\prod_j y_j^{w_{ij}}$ & Gains, Stiffnesses \\[8pt]
Log-Euclidean, $\set{SO}(3)$ & $\|\log R - \log S\|_F^2$ & $\log R$ & $\log S$ & $\exp\!\left(\sum_j w_{ij} \log S_j\right)$ & Orientations \\[8pt]
Log-Euclidean, $\set{SPD}(n)$ & $\|\log P - \log Q\|_F^2$ & $\log P$ & $\log Q$ & $\exp\!\left(\sum_j w_{ij} \log Q_j\right)$ & Covariances \\
\bottomrule
\end{tabular}
\caption{Cost functions admitting closed-form barycentric updates via \Cref{prop:generalized-barycenter}. Weights are $w_{ij} \define \Gamma_{ij} / \sum_k \Gamma_{ik}$.}
\label{tab:cost-functions}
\end{table}

\paragraph{Notes on specific costs.} \Cref{tab:cost-functions} lists cost functions satisfying these conditions that are relevant to robotics applications. Additional details for each transport cost are listed below.
\begin{itemize}
    \item \textbf{Weighted Quadratic.} The weight matrix $W \succ 0$ defines a Mahalanobis distance. Since $W$ appears identically in $f$ and $g$, it cancels in the first-order condition, yielding the standard weighted mean $\opt{z}_i = \sum_j w_{ij} y_j$. This applies to trajectory representations that are linear in their parameters, including spline control points, polynomial coefficients, and Fourier modes. For B-splines, the $L^2$ trajectory distance induces a Mahalanobis metric with $W$ equal to the Gram matrix of the basis functions.
    
    \item \textbf{Regularized Quadratic.} The term $\lambda \|z\|^2$ biases updates toward the origin, giving $\opt{z}_i = \frac{1}{1+\lambda}\sum_j w_{ij} y_j$. This is useful when a nominal trajectory is preferred, with $\lambda$ controlling the bias strength.
    
    \item \textbf{Circular Distance.} For angles $z, y \in S^1$ (e.g., yaw), the extrinsic embedding $z \mapsto (\cos z, \sin z)$ yields the circular mean. Equivalently, $\opt{z}_i = \mathrm{atan2}(\sum_j w_{ij} \sin y_j,\, \sum_j w_{ij} \cos y_j)$.
    
    \item \textbf{KL Divergence.} For positive parameters $z, y > 0$ (element-wise), the update is the weighted geometric mean $\opt{z}_i = \prod_j y_j^{w_{ij}}$. This is natural for quantities like control gains, impedance parameters, or timescales that must remain positive and where multiplicative structure is appropriate.
    
    \item \textbf{Log-Euclidean on $\set{SO}(3)$.} Working in the Lie algebra $\set{SO}(3)$ via the matrix logarithm yields $\opt{R}_i = \exp(\sum_j w_{ij} \log S_j)$. This is an approximation to the true Riemannian (Fréchet) mean, exact for rotations with small relative angles. The same structure applies to poses in $\set{SE}(3)$ by treating translation and rotation separately.
    
    \item \textbf{Log-Euclidean on $\set{SPD}(n)$.} For symmetric positive definite matrices (covariances, inertia tensors, stiffness matrices), the log-Euclidean mean $\opt{P}_i = \exp(\sum_j w_{ij} \log Q_j)$ preserves positive definiteness. This is relevant for belief-space planning and impedance control.
\end{itemize}

\newpage
\section{Experiment Details}\label{appendix:c}

This section provides additional information for the 
experiments presented in the main paper including environment configurations, cost function definitions, and the tuned hyperparameters for different controllers used in the benchmark experiments. The readers are also encouraged to check the provided videos and the 3D visualization tool that showcases the performance of OT-MPC against other controllers like MPPI and CEM.

\subsection{Car (Bicycle) Obstacle Avoidance}

Car navigation task uses a kinematic bicycle model with states $(x, y, \theta, v)$ representing the position, heading, and velocity of the system and with linear acceleration and steering angle as control inputs. The objective is to navigate the car from a starting location to a goal location through a dense obstacle field avoiding collision. The running cost for this task is $w_{goal} \cdot \lvert \lvert position-goal \rvert\rvert^2 + w_{obstacle} \cdot \mathbb{I}_{crash} + w_{control} \cdot control^2$. Here $\mathbb{I}_{crash}$ is the indicator function for detecting obstacle crashes. We tuned for all cost weights and sampler parameters using Optuna with a separate tuning objective cost defined as $(5\cdot \text{mean position error} + 2 \cdot \text{mean orientation error} - 10 \cdot \text{success rate})$. The cost weights and the parameters of the different control algorithms are given in Table \ref{tab:supp_car1}.  We benchmark OT-MPC against MPPI, CEM, and SV-MPC across 300 Monte Carlo trials with randomly generated initializations, goal locations and obstacle field at two difficulty levels: Easy (sparser obstacles, 100 runs) and Hard (denser obstacle field, 200 runs). For $Easy$ difficulty we randomly generate up to 50 obstacles with radius ranging from 0.2m to 0.5m and an obstacle-to-obstacle clearance of 0.75m. For the hard setting, we generate up to 50 obstacles with radius ranging from 0.2m to 0.6m and clearance of 0.5m.

\begin{table}[h]
    \centering
    \begin{tabular}{ccccc}
         \textbf{Weights} &  \textbf{OT-MPC} & \textbf{MPPI} & \textbf{CEM} & \textbf{SV-MPC} \\
         \midrule
          $w_{goal}$ & 0.704& 4.854 & 1.623 & 0.548 \\
          $w_{control}$ & 0.08 & 0.046 & 0.020 & 0.033 \\
          $w_{obstacle}$ & 479.0 & 281.1 & 397.3 & 308.6 \\[12pt]
           \textbf{Params} &  & & & \\
           \midrule
           Horizon & 70 & 30 & 70 & 30 \\
           Iterations & 8 & 8 & 5 & 5 \\
           Samples/iteration & 200 & 500 & 800 & 200 \\
           Inverse Temp. & 0.460 & 1.019 & - & 13.079 \\
           Particles & 20 & - & - & 200 \\
    \end{tabular}
    \caption{Car Experiment - Cost Weights and Parameters}
    \label{tab:supp_car1}
\end{table}

\subsection{2D Push T}

 The objective here is to push and align a T-shaped block from a randomly perturbed initial pose to a goal pose using a circular pusher. Here, the states of the system are the pusher position as well as the position and orientation of the T-block and we use 2D velocity control to move the pusher. The cost function for this task is given by $w_{pos} \cdot \lvert \lvert position-goal_{pos} \rvert\rvert^2 + w_{yaw} \cdot \lvert \lvert yaw-goal_{yaw} \rvert\rvert^2 + w_{control} \cdot control^2$.  We tuned controller parameters using Optuna objective cost $(5\cdot \text{mean position error} + 2 \cdot \text{mean orientation error} - 10 \cdot \text{success rate})$. The cost weight and controller parameters are provided in Table \ref{tab:supp_2dpusht1}.

\begin{table}[h]
    \centering
    \begin{tabular}{ccccc}
         \textbf{Weights} &  \textbf{OT-MPC} & \textbf{MPPI} & \textbf{CEM} & \textbf{SV-MPC} \\
         \midrule
          $w_{pos}$ & 5& 5 & 5 & 5 \\
          $w_{yaw}$ & 5& 5 & 5 & 5 \\
          $w_{control}$ & 0.1 &0.1 &0.1 &0.1 \\[12pt]
           \textbf{Params} &  & & & \\
           \midrule
           Horizon & 50 & 50 & 30 & 60 \\
           Iterations & 16 & 4 & 1 & 12 \\
           Samples/iteration & 125 & 500 & 2000 & 166 \\
           Inverse Temp. & 26.99 & 5.92 & - & 4.49 \\
           Particles & 8 & - & - & 32 \\
    \end{tabular}
    \caption{2D PushT Experiment - Cost Weights and Parameters}
    \label{tab:supp_2dpusht1}
\end{table}

 \subsection{Quadrotor Dense Obstacle Avoidance}

 We consider a 12-DOF Quadrotor with thrust-torque control, navigating in a dense obstacle  field from a starting location to a goal avoiding collision. The running cost for this task is $w_{goal} \cdot \lvert \lvert position-goal \rvert\rvert^2 + w_{obstacle} \cdot \mathbb{I}_{crash} + w_{thrust} \cdot thrust^2 + w_{torque} \cdot torque^2 + w_{velocity} \cdot velocity^2 + w_{height} \cdot \mathbb{I}_{height}$ . Here $\mathbb{I}_{height}$ is an indicator function to detect quadrotor going outside the height bounds (i.e. below the ground or above max height). These cost weights and controller parameters were tuned using Optuna with objective $(5\cdot \text{mean position error} + 2 \cdot \text{mean orientation error} - 10 \cdot \text{success rate})$.  Table \ref{tab:supp_quad1} shows the tuned cost weights and parameters for the controllers. We tested our \algo{OT-MPC} controller against \algo{MPPI}  across 100 trials each on three different environment difficulty settings - \textit{Easy}, \textit{Medium}, \textit{Hard}. The details of different difficulty settings are given below.

 \begin{itemize}
     \item \textit{Easy}: We generate 50 obstacles with radius ranging from 0.3m to 0.6m with a spread parameter of $10m \times 5m$ (width vs height). The obstacles are generated within the height range 0.3m to 5m.
     \item \textit{Medium}: 100 obstacles with radius ranging from 0.3m to 0.8m with a spread parameter of $6m \times 3m$ (tighter than \textit{Easy}). The obstacles are generated within the height range 0.3m to 4m.
     \item \textit{Hard}: 100 obstacles with radius ranging from 0.4m to 0.9m with a spread parameter of $3m \times 2m$ (tighter than \textit{Medium}). The obstacles are generated within the height range 0.5m to 3.5m.
 \end{itemize}

 \begin{table}[h]
    \centering
    \begin{tabular}{ccc}
         \textbf{Weights} &  \textbf{OT-MPC} & \textbf{MPPI}  \\
         \midrule
          $w_{goal}$ & 9.906& 0.242   \\
          $w_{obstacle}$ & 82.26& 13.56   \\
          $w_{thrust}$ & 0.007 &0.017  \\
          $w_{torque}$ & 0.020 &0.021  \\
          $w_{velocity}$ & 0.01 & 0.01  \\
          $w_{height}$ & 294.5 & 10.2  \\[12pt]
           \textbf{Params} &   \\
           \midrule
           Horizon & 100 & 60  \\
           Iterations & 5 & 5  \\
           Samples/iteration & 200 & 500  \\
           Inverse Temp. & 6.650 & 8.054 \\
           Particles & 50 & -  \\
    \end{tabular}
    \caption{Quadrotor Experiment - Cost Weights and Parameters}
    \label{tab:supp_quad1}
\end{table}

\subsection{Two Quadrotor Cooperative Load Carrying}

Here we have a two-quadrotor system trying to cooperatively carry a suspended load from an initial location to a goal. We have a 27DOF system (Two 12DOF quadrotors and 3D position of the suspended load) with a 6-dimensional control - thrust-torque control for each quadrotor but with yaw torque control disabled due to cable constraints.  To increase the complexity of the experiment, we have designed the environment in which the two-quadrotor system must navigate through an opening in the wall to reach the other side (goal).  The cost function for this task is $w_{goal} \cdot \lvert \lvert position-goal \rvert\rvert^2 + w_{wall} \cdot \mathbb{I}_{wall} + w_{thrust} \cdot thrust^2 + w_{torque} \cdot torque^2 + w_{velocity} \cdot velocity^2 + w_{height} \cdot \mathbb{I}_{height} + w_{inter-quad} \cdot \mathbb{I}_{inter-quad} $. $\mathbb{I}_{wall}$ and $\mathbb{I}_{inter-quad}$ are indicator functions to detect wall collisions and inter-quadrotor collision respectively. Similar to other experiments, the cost weights and controller parameters were tuned using Optuna with a similar tuning cost as before and Table \ref{tab:supp_2quad1} shows the tuned values. We  benchmark \algo{MPPI}  and \algo{OT-MPC} in two different difficulty settings and the details are given below.

\begin{itemize}
    \item \textit{Normal}: The location of the hole in the wall and the initial states of the quadrotor are randomized. For this setting, the hole size is $2.0m \times 2.0 m$.
    \item \textit{Hard}: Again, the location of the hole and the initial states of the quadrotor are randomized. The hole size for this setting is $1.2m \times 1.4 m$.
\end{itemize}

\begin{table}[ht]
    \centering
    \begin{tabular}{ccc}
         \textbf{Weights} &  \textbf{OT-MPC} & \textbf{MPPI}  \\
         \midrule
          $w_{goal}$ & 13.39& 2.0   \\
          $w_{thrust}$ & 0.01 &0.01  \\
          $w_{torque}$ & 0.02 &0.02  \\
          $w_{velocity}$ & 3.02 & 5.0  \\
          $w_{height}$ & 100.00 & 100.0  \\
          $w_{wall}$ & 117.1 & 100.0  \\
          $w_{inter-quad}$ & 50.00 & 50.0  \\[12pt]
           \textbf{Params} &   \\
           \midrule
           Horizon & 40 & 50  \\
           Iterations & 8 & 10  \\
           Samples/iteration & 200 & 437  \\
           Inverse Temp. & 5.281 & 7.084 \\
           Particles & 15 & -  \\
    \end{tabular}
    \caption{Two Quadrotor Experiment - Cost Weights and Parameters}
    \label{tab:supp_2quad1}
\end{table}

\subsection{Franka Push-T}

This is the full 3D manipulation extension to the planar Push-T task. Here we use a 14-dimensional Franka Panda arm with a cylindrical stick (poking stick) as the end effector instead of the Panda hand. Along with the T-block states (13 dimensional - position, quaternion, velocity and angular velocity) and End-Effector states (7-dimensional - End-effector position and quaternion), the total system is 34-dimensional. The control trajectories here are parameterized by cubic splines to reduce the control space as well as to generate smoother motion. The MPC cost for this task penalizes error in position ($w_{position}$) and orientation ($w_{orientation}$), control effort ($w_{control}$), velocity ($w_{velocity}$). Furthermore, we have costs to encourage the poking stick to remain perpendicular ($w_{ee_{tilt}}$),  close to the T-Block ($w_{ee_{proximity}}$) and at an optimal height ($w_{ee_{height}}$). Additionally, we have crash costs for violating workspace bounds ($w_{bounds}$), joint limits ($w_{joints}$), and collision with the table ($w_{ee_{crash}}$).  These parameters were tuned using Optuna using similar tuning objectives as other experiments and the tuned values are given in Table \ref{tab:supp_franka1}. We benchmark \algo{OT-MPC} and \algo{MPPI} over 50 trials with randomized T-Block configurations. 

\begin{table}[h]
    \centering
    \begin{tabular}{ccc}
         \textbf{Weights} &  \textbf{OT-MPC} & \textbf{MPPI}  \\
         \midrule
          $w_{position}$ & 21.958 & 20.329   \\
          $w_{orientation}$ & 2.112 & 2.002  \\
          $w_{control}$ & 0.185 & 0.234  \\
          $w_{velocity}$ & 0.1 & 0.127  \\
          $w_{ee_{height}}$ & 10.0 & 12.045  \\
          $w_{ee_{tilt}}$ & 19.092 & 15.433  \\
          $w_{ee_{proximity}}$ & 3.198 & 6.436  \\
          $w_{bounds}$ & 1525.313 & 1960.792  \\
          $w_{joints}$ & 1000.00 & 1000.0  \\
          $w_{ee_{crash}}$ & 954.898 & 5878.907  \\[12pt]
           \textbf{Params} & \\
                     \midrule
           Horizon & 60 & 60  \\
           Iterations & 10 & 10  \\
           Samples/iteration & 1000 & 1200  \\
           Inverse Temp. & 13.640 & 19.966 \\
           Particles & 8 & -  \\
    \end{tabular}
    \caption{Franka Push-T Experiment - Cost Weights and Parameters}
    \label{tab:supp_franka1}
\end{table}

\subsection{Quadruped Tasks}

In order to test the effectiveness of our controller in contact-rich environments, we evaluate the performance of OT-MPC and MPPI  in a simulated Unitree Go2 quadruped. Here we have a 37-dimensional system where the control inputs are 12 joint positions. We parameterize this control input as a cubic Hermite splines to sample smooth joint positions as well as corresponding velocities. The cost of this task penalizes error in position from the goal ($w_{position}$), orientation ($w_{orientation}$) and velocity ($w_{velocity}$). Additionally, we also penalize the deviation from joint angles corresponding to standing position  ($w_{joints}$) as well as the deviation from standing height ($w_{height}$) to encourage stable motion. Unlike other environments, due to the complexity of this environment, we could not use Optuna to tune, and instead we hand-tuned the cost weights and controller parameters. The final parameters are given in Table \ref{tab:supp_go1}. We tuned these parameters in a flat terrain environment, and to test the generalization capabilities, we evaluated the performance of the controllers in different terrain configuration as given below.

\begin{itemize}
    \item \textit{Ramp Climbing}: The quadruped has to climb and reach the goal on the top of the ramp. Here, the ramp slope is $\approx 11.3$ deg and quadruped has to cover a distance of 2.0m.
    \item \textit{Ramp Climbing and Descent}: Similar to the ramp configuration, here the quadruped has to climb the ramp and descend to the goal location on the other side. The ramp slope (both uphill and downhill) is $\approx 11.3$ deg and distance to cover is 4.0m.
    \item \textit{Valley/Bridge Crossing}: Here the quadruped has to travel from a starting to a goal location through a narrow bridge. We tested the quadruped in 3 different scenarios with different bridge widths - \textit{Easy (0.75m)}, \textit{Medium (0.6m)} and \textit{Hard (0.55m)}.
\end{itemize}

\begin{table}[h]
    \centering
    \begin{tabular}{ccc}
         \textbf{Weights} &  \textbf{OT-MPC} & \textbf{MPPI}  \\
         \midrule
          $w_{position}$ & 15.0 & 8.0   \\
          $w_{orientation}$ & 35.0 & 35.0  \\
          $w_{velocity}$ & -0.7 & -0.7  \\
          $w_{joints}$ & 3.5 & 3.5  \\
          $w_{height}$ & 350 & 350  \\[12pt]
           \textbf{Params} &   \\
           \midrule
           Horizon & 50 & 45  \\
           Iterations & 16 & 12  \\
           Samples/iteration & 50 & 45  \\
           Inverse Temp. & 0.01 &  0.01 \\
           Particles & 8 & -  \\
    \end{tabular}
    \caption{Go2 Locomotion - Cost Weights and Parameters}
    \label{tab:supp_go1}
\end{table}

\subsubsection{Box Pushing}

\begin{table}[t]
    \centering
    \begin{tabular}{ccc}
         \textbf{Weights} &  \textbf{OT-MPC} & \textbf{MPPI}  \\
         \midrule
          $w_{position}$ & 17.0 & 12.0   \\
          $w_{orientation}$ & 35.0 & 35.0  \\
          $w_{velocity}$ & -1.5 & -0.7  \\
          $w_{joints}$ & 3.5 & 3.5  \\
          $w_{height}$ & 350 & 350  \\
          $w_{robot2box}$ & 11.0 & 8.5  \\[12pt]
           \textbf{Params} &   \\
           \midrule
           Horizon & 50 & 45  \\
           Iterations & 16 & 12  \\
           Samples/iteration & 50 & 45  \\
           Inverse Temp. & 0.01 &  0.01 \\
           Particles & 8 & -  \\
    \end{tabular}
    \caption{Go2 Box Pushing - Cost Weights and Parameters}
    \label{tab:supp_go2}
\end{table}

We have further extended the locomotion task to a box-pushing task, where the quadruped has to move a box from a starting location to the goal location. Compared to standard locomotion task we have an extra 13 states corresponding to the position, orientation and velocities of the box. The MPC cost is similar to the locomotion cost; however the goal cost ($w_{position}$) is calculated w.r.t to the box position instead of the quadruped position and we add an additional cost ($w_{robot2box}$) to encourage the quadruped to stay near the box. The final parameters are given in Table \ref{tab:supp_go2}.  We benchmark the performance of \algo{MPPI}  and \algo{OT-MPC} across 100 trials with randomized initial robot pose and box initializations based on difficulty settings given below

\begin{itemize}
    \item \textit{Normal} (50 Benchmark runs): The quadruped is initialized in a random orientation around 3.0-4.0m away from the goal location and the box is in front of the quadruped.
    \item \textit{Hard} (50 Benchmark runs): The quadruped is initialized in a random orientation around 3.0-4.0m away from the goal location, and the box is also randomly initialized in the vicinity of the robot (not in front).
\end{itemize}

\end{document}